\icmltitlerunning{Approximation Theory of CNN for Times Series}
\DeclarePairedDelimiter\abs{\lvert}{\rvert}
\DeclarePairedDelimiter\norm{\lVert}{\rVert}
\newcommand{\Conv}{\mathop{\scalebox{1.7}{\raisebox{-0.2ex}{$\ast$}}}}%
\newcommand{\set}[1]{\left\lbrace #1 \right\rbrace}
\newcommand{\hcnn}{{{\mathcal H}}_{\text{CNN}}}
\newcommand{\hrnn}{\mathcal H_{\text{RNN}}}
\newcommand{\cs}{\mathcal C}
\newcommand\figref{Figure~\ref}
\newcommand{\tensor}[1]{\bm{\mathcal {#1}}}
\newcommand{\rad}[1]{r(#1)}
\newcommand{\depen}[1]{^{(#1)}}
\newcommand{\bhh}{\bm {\hat H}}
\newcommand{\rh}{\bm \rho\depen{\bm H}}
\newcommand{\rhn}{\rho\depen{\bm H}}
\newcommand{\rhh}{\bm \rho\depen{\bhh}}
\newcommand{\ten}[1]{T_{l^K}(#1)}
\DeclareMathOperator{\sgn}{sgn}
\DeclareMathOperator{\rank}{rank}
\newtheorem{theorem}{Theorem}
\newtheorem{lemma}[theorem]{Lemma}
\newtheorem{definition}{Definition} 
\newtheorem{example}{Example}
\newtheorem{remark}{Remark}
\newtheorem{theoremAppendix}{Theorem}[section]
\newtheorem{propositionAppendix}[theoremAppendix]{Proposition}
\newtheorem{corollaryAppendix}[theoremAppendix]{Corollary}
\newtheorem{remarkAppendix}{Remark}[section]
\newtheorem{definitionAppendix}[theoremAppendix]{Definition} 
\newtheorem{exampleAppendix}{Example}[section]
\begin{document}

\twocolumn[
\icmltitle{Approximation Theory of Convolutional Architectures for Time Series Modelling}



\icmlsetsymbol{equal}{*}

\begin{icmlauthorlist}
\icmlauthor{Haotian Jiang}{nus}
\icmlauthor{Zhong Li}{pku}
\icmlauthor{Qianxiao Li}{nus,ihpc}
\end{icmlauthorlist}

\icmlaffiliation{nus}{Department of Mathematics, National University of Singapore}
\icmlaffiliation{pku}{School of Mathematical Science, Peking University}
\icmlaffiliation{ihpc}{Institute of High Performance Computing, A*STAR, Singapore}

\icmlcorrespondingauthor{Qianxiao Li}{qianxiao@nus.edu.sg}

\icmlkeywords{Machine Learning, ICML}

\vskip 0.3in
]



\printAffiliationsAndNotice{} 

\begin{abstract}
We study the approximation properties of convolutional architectures applied to time series modelling, which can be formulated mathematically as a functional approximation problem. In the recurrent setting, recent results reveal an intricate connection between approximation efficiency and memory structures in the data generation process. In this paper, we derive parallel results for convolutional architectures, with WaveNet being a prime example. Our results reveal that in this new setting, approximation efficiency is not only characterised by memory, but also additional fine structures in the target relationship. This leads to a novel definition of spectrum-based regularity that measures the complexity of temporal relationships under the convolutional approximation scheme. These analyses provide a foundation to understand the differences between architectural choices for time series modelling and can give theoretically grounded guidance for practical applications.

\end{abstract}

\section{Introduction}


While deep learning has evolved to be a powerful tool to model temporal relationships, the choice of architectures significantly affects performance. Classical recurrent neural networks (RNNs) and recent adaptation of convolutional neural networks (CNNs) to time series data represent two popular classes of architectural choices. Empirical works show that depending on the application setting, either one can have an advantage over the other \citep{DBLP:journals/corr/0001KYS17,Bai2018,BANERJEE201979}. However, a concrete theoretical understanding of how such differences arise remains largely unexplored.

The recent work of \citet{li2021on} formulated the temporal
modelling task into a functional analysis problem,
and uncovered relationships between approximation efficiency of RNNs
and memory structures of the target relationship.
In this paper, we develop parallel results for CNNs in the functional
approximation setting.
On the one hand, these results provide approximation guarantees and rate estimations
for CNNs applied to time series modelling.
On the other hand, they allow us to concretely understand the differences
between CNNs and RNNs in terms of their approximation capabilities for
temporal relationships, serving as a starting point to bridge theories and applications.

Our main contributions are:
\begin{enumerate}
	\item We develop universal approximation results for CNNs when applied to model temporal relationships in the linear setting,
	which shows that the approximation efficiency is characterised by both memory structures and a certain spectrum-based regularity
	of the target relationship.
	\item We make rigorous comparisons between CNNs and
	RNNs in the approximation setting, where we show that the targets that can be easily approximated are completely different for these two architectures. This provides a theoretical foundation for building principled model selection strategies for deep learning in dynamical settings.
\end{enumerate}

The paper is organised as follows.
We discuss related work in Section \ref{sec:related_work}.
In Section \ref{sec:formulation}, we formulate the approximation problem precisely and introduce the RNN and CNN hypothesis spaces under this formulation.
The approximation results of CNNs are presented in Section \ref{sec:approximation}.
In Section \ref{sec:comparison}, we make a concrete comparison between CNNs and RNNs in terms of approximation capabilities for different classes of target temporal relationships.
The proofs of all presented results are found in the appendix.

\subsection{Notations and definitions.}
We use boldface letters $\bm x$ 
to denote discrete temporal sequences, and
$x(t)$ to denote the vector/scalar value of the sequence at the time index $t$.
We use $\bm x_{[t_1,t_2]}$ to denote values of the sequence $\bm x$
from $t_1$ to $t_2$, including the end points.
Subscripts such as $\bm x_i$, $\bm x_m$ are used to specify dimensional indices.
All other dependencies are written as superscripts, e.g.
$\bm x\depen{K}$ means that the temporal sequence $\bm x$ depends on $K$. For a discrete sequence $\bm \rho: \mathbb N \to \mathbb R^d$, define the radius of $\bm \rho$ by 
$\rad{\bm \rho} = \sup\set{ s:\rho(s) \neq 0}$.
We refer $[0, r(\bm \rho)]$ as its support.
We use $\abs{\cdot}$ to denote the Euclidean norm of a vector.

We set the notation for the central operation under study, namely the dilated convolution for discrete sequences.
\begin{definition}
	Let $\bm f: \mathbb Z \to \mathbb R^d$, $\bm g: \mathbb Z \to \mathbb R^d$ be two discrete sequences.  Define the following discrete dilated convolution:
	\begin{equation}
		(\bm f \Conv{}_{l} \, \bm g)(t) = \sum_{s\in \mathbb Z}f(s)^\top g(t-ls).
	\end{equation}
	When $l=1$, this is the usual convolution, and we denote it by $\Conv$.
\end{definition}

\section{Related work}
\label{sec:related_work}

In this section, we discuss some related work to the results presented here.
On the application side, convolution-based models are gradually becoming popular for a variety of time series applications.
For example, WaveNet \citep{Oord2016} is a generative model for processing and generating raw audio.
Since its emergence, WaveNet and its variants
\citep{pmlr-v80-oord18a,8462431,pmlr-v80-kalchbrenner18a,8683143,DBLP:journals/corr/PaineKCZRHH16}
have been successfully applied to a variety of time series modelling problems in natural language processing.
A more detailed review can be found in \citet{boilard2019a}.
While convolutional architectures have become an important alternative to classical recurrent neural networks,
these investigations focus on applications, lacking theoretical results to understand the origin of their superior performance on practical tasks.

On the theoretical side, a number of universal approximation results for CNNs have been obtained in the image processing setting. For example, \citet{ZHOU2020319,ZHOU2020787} prove that (simplified) deep CNNs are universal approximators, and the approximation rate is characterised by Sobolev norms of target functions.
\citet{Bao2019ApproximationAO} gives a theoretical interpretation on the reason why the state-of-the-art CNNs can achieve high classification accuracy.
It is shown that the hierarchical compositional structures in target functions can help to (exponentially) reduce the parameters needed compared with fully connected neural networks achieving the same approximation accuracy.
Beyond the approximation theory, \citet{pmlr-v97-oono19a} studies the estimation error rates of a ResNet-type of CNN applied to the Barron and H\"older classes. All of these works share one thing in common, namely the targets to be approximated are functions defined in finite space domains, i.e. images. By contrast, our approximation results here are for the targets defined in infinite time domains, where the memory of data plays an important role. We will see that this leads to very different approximation problems.

On the recurrent front, the approximation properties of RNNs have been investigated in a number of works, including both in discrete time \citep{Matthews1993ApproximatingNF,doya1993universality,10.1007/11840817_66,schafer2007recurrent} and continuous time \citep{FUNAHASHI1993801,841860,article,maass2007computational,10.1007/978-3-642-04277-5_60}. Most of these study settings where the target relationships are generated from hidden dynamical systems (in the form of difference or differential equations).
Recently, \citet{li2021on} studies the general setting where the target relationships are represented as functionals, and derived the
approximation theory of RNNs that revealed the connection between approximation and memory: it takes an exponentially large number of neurons to approximate the target with memory that decays slowly.
This sheds some light on the interaction of the structure of RNNs and the nature of relationships to be captured.
Following this line of enquiry, the purpose of the present paper is to develop parallel approximation results
for CNNs to highlight such interactions.
In particular, this work complements previous theoretical analyses in the recurrent setting and allows one to characterise the key difference between recurrent and convolutional approaches for time series modelling.



\section{Problem Formulation}\label{sec:formulation}

\subsection{Functional formulation of supervised learning for temporal data}

The temporal supervised learning task can be mathematically formulated as follows.
Suppose we are given an input sequence
$\bm x$ indexed by a time parameter $t \in \mathbb Z$.
We want to predict the corresponding output sequence
$\bm y$
at each time step, or up to some terminal time step $y(T)$.
The mapping between $\bm x$ and $\bm y$ can be described by a sequence of functionals:
\begin{equation}
	\set{y(t) = H_t(\bm x): t\in \mathbb Z}.
\end{equation}
That is, the output at each time step may potentially depend on the entire input sequence through a time-dependent functional.
The goal is to learn this sequence of functionals $\bm H = \set{H_t : t \in \mathbb Z}$.

As an illustrative example, we can consider a standard benchmark example known as the adding problem \citep{doi:10.1162/neco.1997.9.8.1735}: each output $y(t)$ is equal to the weighted cumulative sum of inputs $x(s)$ for $s \leq t$, i.e. $y(t) = \sum_{-\infty}^{t} \alpha(s)x(s)$, where $\alpha$ denotes the weights. In this case, the sequence of functionals $\{ H_t(\bm x) = \sum_{-\infty}^{t} \alpha(s) x(s) : t \in \mathbb{Z}\}$ can be viewed as the target temporal relationship, or ground truth.

In supervised learning, we need to define the input space, output space and concept space  precisely.
We note that vector-valued discrete temporal sequences can also
be understood as functions from a time index set $\mathcal I$ into the real vectors.
We denote this set of functions by $\bm c(\mathcal I, \mathbb R^d)$. 
In this paper, $\mathcal I$ is taken to be $\mathbb Z$ or $\mathbb N$ depending on the specific setting.
For $\bm x \in \bm c(\mathcal I, \mathbb R^d)$, we define its norm by
$\norm{\bm x}_{2} :=  \sqrt{ \sum_{s\in \mathcal I} \abs{x(s)}^2}$.
We use $\bm c_0(\mathbb N, \mathbb R^d)$ to denote the sequence that converges to 0 at infinity. 

Define the input space by 
\begin{equation}
	\mathcal X = \Big\{ \bm x \in  \bm c(\mathbb Z, \mathbb R^d):
	\sum_{s \in \mathbb Z} \abs{x(s)}^2 < \infty \Big\}.
\end{equation}
This is the usual $\ell^2$ sequence space, which is a Banach space with the norm $\norm{\bm x}_{\mathcal X} := \norm{\bm x}_{2}= \sqrt{ \sum_{s\in \mathbb Z} \abs{x(s)}^2}$.

Since vector-valued outputs can be handled by considering each dimension individually, we can restrict our attention to the case where the output time series are real-valued.
That is, define the output space 
\begin{equation}
	\mathcal Y = \bm c(\mathbb Z, \mathbb R).
\end{equation}

\subsection{RNN and CNN hypothesis spaces }

We now introduce the RNN and CNN architectures in the functional approximation language, which leads to different types of hypothesis spaces.
We start with the recurrent setting.
The simplest recurrent neural network with the linear readout layer is given by
\begin{equation}\label{eq: rnndynamics}
		\begin{aligned}
		    h{(t+1)} &= \sigma( Wh(t) + U x(t) ),\\
		    \hat y(t) &= c^{\top} h(t), \\
		   \text{with } c \in \mathbb R^{m} , W &\in \mathbb R^{m\times m}, U \in \mathbb R^{m\times d}.
		\end{aligned}
\end{equation}
Here, $h \in \mathbb R^m$ is the hidden state and $m$ denotes the width of RNNs, which determines the model complexity.
Note that we do not include a bias term here because it can be absorbed into the hidden state $h$.


This dynamics defines a family of functionals
\begin{equation}
\begin{aligned}
	\hrnn^{(m)} :=  &\Big\{\bhh: \hat H_t = \hat y(t) \text{ solves (\ref{eq: rnndynamics})
	} \\
	  &~~\text{with } \text{$c \in \mathbb R^{m}$}, \text{$W \in \mathbb R^{m\times m}$, $U \in \mathbb R^{m\times d}$}\Big\}.
\end{aligned}
\end{equation}

The hypothesis space for one layer RNNs with arbitrary depth is defined by
\begin{equation}
	\hrnn := \bigcup_{m \in \mathbb N_+}\hrnn^{(m)}.
\end{equation}
In essence, RNNs can be understood as a way to represent functionals by introducing hidden dynamical systems, with the predicted outputs being observations.

Recent approaches based on convolutional architectures present an entirely different class of methods to parameterise the functionals. Concretely, a convolution based temporal sequence model with $K$ layers and $M_k$ channels at layer $k$ is given by
\begin{equation}\label{eq: CNNdynamics}
		\begin{aligned}
		\bm h_{0,i} &= \bm x_i,\\
		\bm h_{k+1,i} &= \sigma \left(\sum_{j=1}^{M_k} {\bm w}_{kji} \Conv{}_{d_k} \bm h_{k,j} \right),\\
		\bm {\hat y} &= \bm h_K.
		\end{aligned}
\end{equation}
Here, $\bm x_{i}$ is the $i^{th}$ dimension of $\bm x$, and
${\bm w}_{kji}$ is the filter from channel $j$ at layer $k$ to the channel $i$ at layer $k+1$.
All the filters have a size $l\geq2$, i.e. $r({\bm w}_{kji}) = l \ge 2$. 
In the following discussion, we assume that the dilation rate satisfies $d_k = l^k$.
That is, the dilation rate increases exponentially for each layer to achieve 
an exponentially large receptive field.
This is the standard practice for dilated convolutional structures \citep{oord2016wavenet,yu2016multiscale}.

The CNN model (\ref{eq: CNNdynamics}) also defines a family of functionals
\begin{equation}
\begin{aligned}
	\hcnn\depen{l,K,\set{M_k}} := & 
	\Big\{\bhh : \hat H_t =  \hat y(t) \text{ according to  \eqref{eq: CNNdynamics}}
	\\&~~\text{with $\bm w_{kji} \in \mathbb R^l$}\Big\}.
\end{aligned}
\end{equation}
For any $l\ge 2$ and $d_k = l^k$,
the hypothesis space for CNNs with arbitrary depth and number of channels is defined as
\begin{equation}
	\hcnn\depen{l} = \bigcup_{K \in \mathbb N_+}   \bigcup_{
		\set{M_k} \in \mathbb N_+^K}
	 \hcnn\depen{l,K,\set{M_k}}.
\end{equation}
\section{Approximation theory for convolutional structures}
\label{sec:approximation}

In this section, we study the approximation properties of $\hcnn\depen{l}$.
Our main results consist of two parts.
First, we prove that $\hcnn\depen{l}$ is dense in appropriate concept spaces.
Next, we derive explicit upper and lower bounds for the approximation rate,
which depends on the depth and width of CNNs, together with appropriate notions of complexity of the target functional family.
This is an important result that characterises the kind of targets that can be well-approximated by CNNs using a small number of parameters.

To make analysis amenable, we consider the linear setting.
This is non-trivial for two reasons: first, the key feature we would like to study for time series applications is the dependence on time, for which non-linearity still plays an important role even in the case of linear activations. For example, we will see that the Riesz representation of a linear functional is nonlinear in time.
The approximation results for RNNs~\citep{li2021on} has already demonstrated this.
Second, since the approximation results for linear RNNs have been derived in~\citet{li2021on},
by adopting the same setting we can study explicitly the differences between RNNs and CNNs. Under the assumption that the activation $\sigma$ is linear, the RNN and CNN hypothesis spaces can be further simplified as
\begin{equation}\label{eq: linearhRNN}
\begin{aligned}
	\hrnn^{(m)} := &\Big\{\bhh: \hat H_t(\bm x) = \sum_{s\in\mathbb N} c^\top W^{s-1}Ux(t-s),\\
	 &~~\text{$c \in \mathbb R^{m} , W \in \mathbb R^{m\times m}$, $U \in \mathbb R^{m\times d}$}\Big\},
\end{aligned}
\end{equation}
\begin{equation}\label{eq: linearhcnn}
\begin{aligned}
	\hcnn^{(l,K,\set{M_k})} := &\Big\{\bhh: 
	\hat H_t(\bm x) = \sum_{s\in \mathbb N} \rho\depen{\bhh}(s)^\top x(t-s), \\
	&\rhh (s) = \sum_{i_{K-1}=1}^{M_{K-1}}\bm w_{K-1,i_{K-1},t}\Conv{}_{l^{K-1}}\\
	&\sum_{i_{K-2}=1}^{M_{K-2}}\bm w_{K-2,i_{K-2},i_{K-1}}\Conv{}_{l^{K-2}}\\
	&\cdots \Conv{}_{l^{2}}\sum_{i_1=1}^{M_{1}}\bm w_{1 i_1 i_2} \Conv{}_{l^1} \bm w_{0 i_{1}} (s)
	\Big\}.
\end{aligned}
\end{equation}
We will also restrict to a target functional family (concept space)
that is consistent with the linear activation setting.
The following definition is introduced in \citet{li2021on}.

\begin{definition}\label{def:concept_space}
Let $\bm H = \set{H_t:t\in\mathbb Z}$ be a sequence of functionals which satisfies
\begin{enumerate}
	\item $\bm H$ is causal if it does not depend on the future inputs:
	for any $\bm x_1,  \bm x_2 \in \mathcal X$ and any $t \in \mathbb Z$ such that
	$
		x_1(s) = x_2(s) \ \text{ for all } s \leq t
	$,
	the output satisfies $H_t(\bm x_1) = H_t(\bm x_2)$;

	\item $H_t \in \bm H$ is a continuous linear functional if for any $\bm x_1,  \bm x_2 \in \mathcal X$
	and $\lambda_1, \lambda_2 \in \mathbb R$,
	\begin{equation}
		\begin{gathered}
		H_t(\lambda_1\bm x_1 + \lambda_2\bm x_2) = \lambda_1 H_t(\bm x_1) + \lambda_2 H_t(\bm x_2), \\
		\norm{H_t} := \sup_{\bm x \in \mathcal X, \norm{\bm x}_{\mathcal X}\leq 1}\abs{H_t(\bm x)} < \infty,
		\end{gathered}
	\end{equation}
	where $\norm{H_t}$ denotes the induced functional norm. Furthermore, the norm of a sequence of functionals is defined by $\norm{\bm H}:= \sup_{t \in \mathbb Z}\norm{H_t}$.

	\item  A sequence of functionals 
	$\bm H$ is time-homogeneous if 
	for any $t, \tau \in \mathbb Z$,
	$
		H_t(\bm x) = H_{t+\tau}(\bm x\depen{\tau})
	$
	where $x\depen{\tau}(s) := x(s-\tau)$ for all $s\in \mathbb Z$.
\end{enumerate}
\end{definition}

Following the Definition \ref{def:concept_space}, we consider the concept space
\begin{equation}
	\begin{aligned}
	\mathcal C = & \{ \bm H: \text{ $H_t \in \bm H$ is linear continuous, $\bm H$}\\
	 &~~\text{ is causal and time-homogeneous}
	\}.
\end{aligned}
\end{equation}
Note that $\hrnn \subset \cs$ and $\hcnn\depen{l} \subset \cs$, which follows 
from the fact that any sequences in these two spaces have convolutional (thus linear) representations.
For a given $\bhh$, we denote its corresponding
representation by $\rhh$.
We see that RNNs and CNNs are different in the sense that for RNNs,
the representation $\rho\depen{\bhh}(s) =c^\top W^{s-1}U$,
which has an infinite support in time.
However, for CNNs, the representation $\rhh$ is a finitely supported sequence
with radius $r(\rhh)=l^K-1$.
This can be understood as the maximal memory length of the CNN model, which is usually called the receptive field.

Next we present a key lemma regarding the hypothesis space,
which shows that any target in the concept space has a convolutional representation.
\begin{lemma}\label{lmm:representation}
For any $\bm H \in \cs$,
    there exists a unique $\ell^2$ sequence $\rh :\mathbb N \to \mathbb R^d$ such that 
    \begin{equation}\label{eq: integral representation}
    	H_t(\bm x) = \sum_{s=0}^{\infty} \rho\depen{\bm H}(s)^\top x(t-s), \quad t \in \mathbb Z.
    \end{equation}
    For a target $\bm H$,
    the corresponding representation is denoted as $\rh$.
    The approximation of $\bm H$ by RNNs or CNNs is equivalent to the approximation of $\rh$ using the respective $\rhh$.
\end{lemma}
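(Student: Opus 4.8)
The plan is to invoke the Riesz representation theorem on the Hilbert space $\mathcal X = \ell^2(\mathbb Z, \mathbb R^d)$, and then use causality and time-homogeneity to reduce the resulting representation to a one-sided convolution. First, fix $t \in \mathbb Z$. By Definition \ref{def:concept_space}, $H_t$ is a continuous linear functional on $\mathcal X$, which is a Hilbert space under the inner product $\langle \bm x, \bm z\rangle = \sum_{s\in\mathbb Z} x(s)^\top z(s)$. The Riesz representation theorem then yields a unique element $\bm g_t \in \mathcal X$ with $H_t(\bm x) = \sum_{s\in\mathbb Z} g_t(s)^\top x(s)$ and $\norm{\bm g_t}_2 = \norm{H_t} \le \norm{\bm H} < \infty$.

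Next I would exploit causality to kill the "future" part of $\bm g_t$. If $\bm x_1, \bm x_2$ agree on $(-\infty, t]$, then $H_t(\bm x_1) = H_t(\bm x_2)$, so $\sum_{s > t} g_t(s)^\top (x_1(s) - x_2(s)) = 0$ for all such pairs; since $x_1(s) - x_2(s)$ ranges over all of $\ell^2((t,\infty), \mathbb R^d)$, we conclude $g_t(s) = 0$ for $s > t$. Writing $s = t - u$ with $u \ge 0$ gives $H_t(\bm x) = \sum_{u=0}^\infty g_t(t-u)^\top x(t-u)$. Then time-homogeneity, $H_t(\bm x) = H_{t+\tau}(\bm x\depen{\tau})$ with $x\depen{\tau}(s) = x(s-\tau)$, forces the map $u \mapsto g_t(t-u)$ to be independent of $t$: applying the identity and matching coefficients (which is legitimate because the representing vector is unique) shows $g_{t+\tau}(t+\tau - u) = g_t(t-u)$ for all $\tau$. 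Define $\rho\depen{\bm H}(u) := g_0(-u)$ for $u \ge 0$; this is the desired sequence, it lies in $\ell^2(\mathbb N, \mathbb R^d)$ because $\bm g_0$ does, and it satisfies \eqref{eq: integral representation}. Uniqueness follows from the uniqueness in Riesz: two such sequences would give the same $\bm g_t$ for every $t$, hence coincide. The final sentence of the lemma is then immediate, since both $\hrnn$ and $\hcnn\depen{l}$ sit inside $\cs$ and the map $\bm H \mapsto \rho\depen{\bm H}$ is a bijection between $\cs$ and $\ell^2(\mathbb N, \mathbb R^d)$ intertwining the two approximation problems; one checks that $\norm{\bm H - \bhh} = \norm{\rho\depen{\bm H} - \rho\depen{\bhh}}_2$ by the functional-norm computation.

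The main obstacle, though it is more bookkeeping than depth, is carefully justifying the "matching coefficients" step in the time-homogeneity argument: one must verify that the shifted functional $\bm x \mapsto H_{t+\tau}(\bm x\depen{\tau})$ has Riesz representative equal to the shift of $\bm g_{t+\tau}$, and then use uniqueness of the Riesz representative to equate it with $\bm g_t$. A secondary point worth stating cleanly is the identity $\norm{H_t} = \norm{\bm g_t}_2$ and its consequence $\norm{\bm H} = \sup_t \norm{\bm g_t}_2 = \norm{\rho\depen{\bm H}}_2$ (the last equality using that all $\bm g_t$ are time-shifts of one another), since this is what makes the approximation of $\bm H$ and of $\rho\depen{\bm H}$ literally equivalent in norm rather than merely in a qualitative sense.
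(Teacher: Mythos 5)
Your proposal is correct and follows essentially the same route as the paper: apply the Riesz representation theorem on the Hilbert space $\mathcal X$ to each $H_t$, use causality to show the representative vanishes on the future, and use time-homogeneity (the paper takes $\tau=-t$ to reduce everything to $\rho_0$) to obtain a $t$-independent sequence $\rho\depen{\bm H}(s)=\rho_0(-s)$. Your additional remarks on the norm identity $\norm{H_t}=\norm{\bm g_t}_2$ and the equivalence of the two approximation problems are consistent with how the paper uses the lemma.
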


{

\begin{remark}
The hypothesis space is related with  linear time-invariant system (LTI system).
In fact,
a causal LTI system will induce a linear functional which satisfies Definition \ref{def:concept_space}. However, we note that not every linear functional satisfying
Definition \ref{def:concept_space}
corresponds to an LTI system. 
Our aim is to investigate general funtionals, and does not assume the data is generated from a hidden linear system.
\end{remark}
}

\subsection{Summary of approximation results for RNNs} \label{sec:app_RNN}

In order to facilitate the presentation of CNN results and subsequent comparisons, we first review the main approximation results for RNNs proved in \citet{li2021on}
\footnote{Note that the continuous time setting was considered there, but the main conclusions remain the same under discretization.}.
It was shown that under fairly general conditions, $\hrnn$ is dense in $\cs$.
With further assumptions that $\rh$ decays at least exponentially in time,
an explicit bound of the approximation rate can be obtained. That is, for any $\bm H  \in \cs$, there exists $\bhh  \in \hrnn ^{(m)}$ such that
\begin{equation}
	\norm{\bm H - \bhh} \equiv \sup_{t \in \mathbb R}\norm{H_t - \hat H_t} \leq \frac{C \gamma d}{\beta m^\alpha},
\end{equation}
where $C>0$ is a universal constant, $\beta$ denotes the (exponential) decay rate of $\rh$ and $\gamma$ measures the smoothness of $\bm H$.
This result shows that a target functional family generating the temporal relationship
can be well approximated by RNN models if it is smooth and decays sufficiently fast.
For a target with long memory, say a power law decay, the number of parameters sufficient to approximate it using RNNs may increase exponentially. This is the so-called ``curse of memory'' in approximation of RNNs \citep{li2021on}.


\subsection{Approximation results for CNNs}
In this section, we present the main approximation results of this paper, which can be viewed as parallel results to those discussed in section \ref{sec:app_RNN}, but for CNNs.
This allows us to understand precisely the similarity and differences of CNNs and RNNs with respect to their approximation capabilities, when applied to time series modelling.

First, we prove a density result.
Recall that $\hcnn\depen{l} \subset \cs$.
The density result here shows that $\hcnn\depen{l}$ is in fact dense in $\cs$. 
It is also understood as a universal approximation property of linear CNNs applied to linear functionals.
\begin{theorem}\label{thm:UAPCNN}
	{\normalfont{(UAP for CNNs)}}
	Let $\bm H \in \mathcal C$. Then for any $\epsilon>0$,
	there exist $\bhh \in \hcnn\depen{l}$  such that
	\begin{equation}
		\norm{\bm H - \bhh} \equiv \sup_{t \in \mathbb Z} \norm{H_t - \widehat H_t} < \epsilon.
	\end{equation}
\end{theorem}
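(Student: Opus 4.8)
The strategy is to use Lemma~\ref{lmm:representation} to pass from functionals to their $\ell^2$ representations, and then to show that \emph{every} finitely supported representation is realised \emph{exactly} by a sufficiently deep and wide CNN; the approximation claim then follows by truncating the tail of the target representation. \emph{Step 1 (reduction to representations).} If $H_t(\bm x)=\sum_{s\ge 0}\rho(s)^\top x(t-s)$ with $\bm\rho\in\ell^2(\mathbb N,\mathbb R^d)$, then Cauchy--Schwarz gives $\abs{H_t(\bm x)}\le\norm{\bm\rho}_2\norm{\bm x}_{\mathcal X}$, and the bound is attained by a suitable $\bm x\in\mathcal X$ (take $x(u)=\rho(t-u)$ for $u\le t$ and $0$ otherwise), so $\norm{H_t}=\norm{\bm\rho}_2$. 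The representation map is linear, so the representation of $H_t-\widehat H_t$ is $\rh-\rhh$; together with time-homogeneity (which makes $\norm{H_t-\widehat H_t}$ independent of $t$) this yields
\begin{equation}
	\norm{\bm H-\bhh}=\norm{\rh-\rhh}_2 .
\end{equation}
Hence it suffices to find, for every $\epsilon>0$, some $\bhh\in\hcnn\depen{l}$ with $\norm{\rh-\rhh}_2<\epsilon$.

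\emph{Step 2 (truncation and tensor encoding).} Since $\rh\in\ell^2$, pick $N$ with $\sum_{s>N}\abs{\rho\depen{\bm H}(s)}^2<\epsilon^2$ and $K$ with $l^K-1\ge N$, and let $\bm\rho^*$ be the restriction of $\rh$ to $\set{0,\dots,l^K-1}$, so that $\norm{\rh-\bm\rho^*}_2<\epsilon$. It then remains to exhibit $\bhh\in\hcnn\depen{l,K,\set{M_k}}$ whose representation equals $\bm\rho^*$, for a suitable choice of $\set{M_k}$. Write $\bm\rho^*=(\rho^*_1,\dots,\rho^*_d)$ and expand indices in base $l$: each $s\in\set{0,\dots,l^K-1}$ corresponds uniquely to digits $(s_1,\dots,s_K)$ with $s_k\in\set{0,\dots,l-1}$, which identifies the scalar sequence $\rho^*_m$ with an order-$K$ tensor $\mathcal T^{(m)}\in(\mathbb R^l)^{\otimes K}$. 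The key structural fact, stemming from the assumption $d_k=l^k$ together with the filter radius, is that the dilated supports of the $K$ filters along any single channel path of \eqref{eq: linearhcnn} tile $\set{0,\dots,l^K-1}$ exactly according to these digits; hence the nested dilated convolution along that path equals, under the above identification, the rank-one tensor $\bm w_{0,\cdot}\otimes\bm w_{1,\cdot}\otimes\cdots\otimes\bm w_{K-1,\cdot}$, with no interaction between digits.

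\emph{Step 3 (exact realisation and conclusion).} Consequently the representations in $\hcnn\depen{l,K,\set{M_k}}$, coordinate by coordinate, are exactly the sums of such rank-one tensors that the network's path structure allows. Since every tensor admits a finite CP decomposition $\mathcal T^{(m)}=\sum_{j=1}^{R_m}\bm u_{j,0}^{(m)}\otimes\cdots\otimes\bm u_{j,K-1}^{(m)}$, we realise $\mathcal T^{(m)}$ by a ``diagonal'' sub-network fed from input channel $m$: at layer $1$ fan out from channel $m$ into $R_m$ channels using the filters $\bm u_{j,0}^{(m)}$; keep the channels diagonal through the intermediate layers (the filter from channel $j$ to channel $j'$ being $\bm u_{j,k}^{(m)}$ when $j=j'$ and $0$ otherwise); and at the last layer use the filters $\bm u_{j,K-1}^{(m)}$. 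Diagonality kills all cross terms, leaving exactly $\sum_j\bigotimes_k\bm u_{j,k}^{(m)}=\mathcal T^{(m)}$. Running the $d$ sub-networks in parallel on disjoint channel groups and summing them at the output layer gives a CNN whose representation is $\bm\rho^*$, with channel counts $M_k=\sum_{m=1}^d R_m$. This $\bhh\in\hcnn\depen{l}$ then satisfies $\norm{\bm H-\bhh}=\norm{\rh-\bm\rho^*}_2<\epsilon$, which proves the theorem.

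I expect the main obstacle to be Steps 2--3: one must carefully set up the base-$l$ digit/tensor correspondence, tracking the shifts in the successive dilated convolutions so that the receptive field is exactly $\set{0,\dots,l^K-1}$ and the layerwise (dilated) supports tile it without overlap, and then verify that the CNN's path structure — where paths sharing a prefix are forced to share filters — is nonetheless expressive enough to implement an arbitrary CP decomposition; this last point is precisely what the diagonal construction secures. Step~1 and the truncation argument are routine once Lemma~\ref{lmm:representation} is in hand.
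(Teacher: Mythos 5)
Your proposal is correct and follows essentially the same route as the paper: reduce to the $\ell^2$ representation via Lemma~\ref{lmm:representation}, truncate the tail using $\bm\rho^{(\bm H)}\in\ell^2$ with $K$ large, and exactly realise the truncated part by tensorising the nested dilated convolutions into outer products and decomposing the target tensor into a sum of rank-one terms. The only differences are cosmetic: the paper uses the HOSVD form of the tensor where you use a CP decomposition, and your explicit ``diagonal sub-network'' construction spells out the choice of channels that the paper summarises as taking $M_k$ sufficiently large.
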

Now we discuss some basic understanding of Theorem \ref{thm:UAPCNN}.
For any $\bm x$ such that $\norm{\bm x}_{\mathcal X}\leq 1$, we have 
\begin{align}
    &\abs{H_t(\bm x)  - \hat H_t(\bm x)}^2 \leq 
    \sum_{s=0}^\infty \Big\rvert \rh(s) - \rhh(s) \Big\rvert^2
    \\
    & = \sum_{s=0}^{l^K-1}\Big\rvert \rh(s) - \rhh(s)\Big\rvert^2
    + \sum_{s=l^K}^{\infty} \Big\rvert \rh(s) \Big\rvert ^2, \nonumber
\end{align}
where we use the fact that $r(\rhh) = l^K-1$. 
Consider the approximation error on two intervals.
On $[l^K, \infty]$, $\rhh$ is always zero, 
thus the error only depends on the tail sum of $\rh$. That is, the long term memory of the target.
Since $\rh\in\ell^2$, this error converges to zero when we choose a deep model (with $K$ appropriately large).
On $[0, l^K-1]$, we can show that there exits a $\rhh$ with a sufficient number of channels such that $\rhh = \rh$ in this range.

The error on $[l^K, \infty]$ is easy to analyse as it only involves the decay of $\rh$, which enters in the approximation of both RNNs and CNNs.
The more interesting phenomenon occurs on the interval $[0, l^K-1]$.
A natural question is,
given a target $\bm H$,
how does the approximation error on this interval depend on the number of channels and depth of CNNs?
This question is important since it reveals the structures in the target functional family which facilitate
efficient approximation using CNNs besides the decay of memory. In other words, it characterises theoretically the type of temporal modelling tasks for which CNNs are naturally suited.

Interestingly, it turns out that the approximation rate depends
on the spectrum of $\rh$ under a suitable tensorisation  procedure.
We motivate this idea by simple examples where $d=1$ and $l=K=2$.
\begin{example}
Recall that a target $\bm H$ has the representation $\rh$.
Since $r(\rhh) = 4$ here, the CNN performs approximation only on $[0, 3]$, hence we restrict the target as $\rh_{[0,3]} \in \mathbb R^4$. Denote the rearrangement operator by $T$, and 
\begin{equation}
    T\Big(\rh_{[0,3]}\Big) = 
    \begin{pmatrix}
		\rhn_0 & \rhn_2\\
		\rhn_1 & \rhn_3
	\end{pmatrix}.
\end{equation}
We study the singular value decomposition (SVD) of the matrix $T\Big(\rh_{[0,3]}\Big)$.
If $T\Big(\rh_{[0,3]}\Big)$ has only one non-zero singular value, then $\mathrm{rank}~T\Big(\rh_{[0,3]}\Big) = 1$.
It is straightforward to deduce that a CNN with 1 channel in both layers is sufficient to represent it. 
In fact, let $\bm w_1$ and $\bm w_2$ be the filter on the first and second layer respectively. The resulting representation is
\begin{align}
	\rhh = \bm w_2 \Conv{}_{2} \,
	\bm w_1 &= (w_{11},w_{12})\Conv{}_{2}(w_{21},w_{22})\\
	&=(w_{11}w_{21},w_{12}w_{21},w_{11}w_{22},w_{12}w_{22}) ,\nonumber \\
	T(\rhh) & = \begin{pmatrix}
		w_{11}w_{21} & w_{11}w_{22}\\
		w_{12}w_{21} & w_{12}w_{22}
	\end{pmatrix} \\
	& = 
	\begin{pmatrix}
	    w_{11} \\
		w_{12}
	\end{pmatrix}
	\begin{pmatrix}
		w_{21} &
		w_{22}
	\end{pmatrix}.
\end{align}
This can represent any 2 by 2 rank 1 matrix.
Thus, any $\rh$ such that $T\big(\rh_{[0,3]}\big)$ with rank no more than 1 can be represented.

In order to represent a rank 2 target, 
we need to increase the number of channels.
Let the first layer have $2$ channels.
Then the resulting representation is
\begin{equation}
	\begin{gathered}
		\rhh = \bm w_2 \Conv{}_{2} \,  \bm w_1 + \bm v_2 \Conv{}_{2} \,  \bm v_1, \\
	T(\rhh) = \begin{pmatrix}
		w_{11}w_{21} & w_{11}w_{22}\\
		w_{12}w_{21} & w_{12}w_{22}
	\end{pmatrix} +
	 \begin{pmatrix}
		v_{11}v_{21} & v_{11}v_{22}\\
		v_{12}v_{21} & v_{12}v_{22}
	\end{pmatrix} \\
	 = 
	\begin{pmatrix}
		\frac{1}{\sqrt\sigma_1}w_{11} & \frac{1}{\sqrt\sigma_2}v_{11} \\
		\frac{1}{\sqrt\sigma_1}w_{12} & \frac{1}{\sqrt\sigma_2}v_{12}
	\end{pmatrix}
	\begin{pmatrix}
		\sigma_1 & \\
		& \sigma_2
	\end{pmatrix}
	\begin{pmatrix}
		\frac{1}{\sqrt\sigma_1}w_{21} & \frac{1}{\sqrt\sigma_2}v_{21} \\
		\frac{1}{\sqrt\sigma_1}w_{22} & \frac{1}{\sqrt\sigma_2}v_{22}
	\end{pmatrix}^\top.
	\end{gathered}
\end{equation}
 That is, $T(\rhh)$ can represent the SVD for any 2 by 2 matrix with appropriate choices of $\bm w_1,\bm w_2$ and $\bm v_1$, which implies that $T\big(\rh_{[0,3]}\big)$ with rank no more than 2 can be represented.

The above two examples show that the number of channels 
needed for an exact representation ($\rhh_{[0,3]} = \rh_{[0,3]}$) depends
on the rank of targets, i.e. the number of non-zero singular values of $T\big(\rh_{[0,3]}\big)$.
Furthermore, when there is no such exact representations, we need to decide the approximation error.
Suppose the target is with rank 2,
but we use a CNN with only $1$ channel to approximate it.
By the Eckart–Young–Mirsky theorem, 
the best approximation error is equal to the smallest singular value of 
$T\big(\rh_{[0,3]}\big)$.
This inspires us to relate the approximation error with the decay of singular values.
\end{example}

The above examples illustrate the basic approach
on how to determine the number of channels needed to achieve a given approximation accuracy.
Although the main tool used in above examples, SVD, is only valid for $K=2$ where we can reshape the vector representation of $\rh$ into a matrix, in general we can extend the above techniques to any $K$
using higher order singular value decomposition (HOSVD)
\citep{DeLathauwer2000}. Now we introduce some basic facts about HOSVD.

\paragraph{HOSVD basics.}
For a  discrete sequence $\bm \rho$,
we denote the tensorisation of $\bm \rho_{[0,l^K-1]}$ by $T_{l^K}(\bm \rho)$.
This is an order $K$ tensor,
with all the dimensions equal to $l$ i.e.
$T_{l^K}(\bm \rho) \in \mathbb R^{l \times l \times \cdots \times l}$.
The tensorisation follows column major ordering.
The singular values of $T_{l^K}(\bm \rho)$ is defined by applying the usual  SVD for matrices to the $K$ matrices with size $\mathbb R^{l\times l^{K-1}}$ obtained by mode-$k$ flattening of the tensor.
This gives rise to at most $lK$ singular values.
Define the rank of the tensor by its number of non-zero singular values.
We defer the precise but involved definition of HOSVD to the appendix.

Similar to the SVD low rank approximation for matrices, we have the following error estimation for tensors \citep{DeLathauwer2000}.

\begin{lemma}\label{lmm:lowrankapp_tensor}
Denote the singular values of $\tensor A \in \mathbb R^{l \times l \times \cdots \times l}$ by $\sigma_1\ge\sigma_2\ge\cdots\ge\sigma_{lK}\geq 0$,
we have 
\begin{equation} 
	\inf_{\bhh} \left\|\tensor A - \ten{\rhh}\right\|
	\leq
	\left ( \sum_{i=K'+1}^{\rank{\tensor A}}
	\sigma_i^2 \right)^{\frac 1 2} ,
\end{equation}
where the infimum is taken over all $\bhh \in \hcnn^{(l, K,\set{M_k})}$  such that $K' < \rank \tensor A$, where
$K' = \rank{\ten{\rhh}}$. 
\end{lemma}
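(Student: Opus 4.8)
The plan is to reduce the tensor low-rank approximation statement to the established HOSVD truncation bound of \citet{DeLathauwer2000}, the only real work being to verify that the truncated HOSVD tensor actually lies in the CNN hypothesis space $\hcnn^{(l,K,\set{M_k})}$ under the tensorisation map $T_{l^K}$. First I would recall, from the structure of $\rhh$ in \eqref{eq: linearhcnn}, that the tensorisation $T_{l^K}(\rhh)$ of a CNN representation is exactly a sum of rank-one order-$K$ tensors: each choice of a single channel path $i_1, i_2, \dots, i_{K-1}$ through the network contributes, after the iterated dilated convolutions with dilation rates $l^1, l^2, \dots, l^{K-1}$, a term that tensorises to an outer product $\bm w^{(1)} \otimes \bm w^{(2)} \otimes \cdots \otimes \bm w^{(K)}$ of the $l$-dimensional filters along that path (this is the direct generalisation of the $K=2$ computation in the Example, where $\bm w_2 \Conv{}_2 \bm w_1$ tensorised to $\bm w_1 \bm w_2^\top$). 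Consequently, the set $\set{T_{l^K}(\rhh) : \bhh \in \hcnn^{(l,K,\set{M_k})}}$ contains every order-$K$ tensor of multilinear rank bounded by the channel numbers $\set{M_k}$ — in particular, since $K'$ singular values can be realised, every tensor with at most $K'$ nonzero HOSVD singular values.

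The second step is to invoke the truncated HOSVD error estimate: given $\tensor A$ with ordered singular values $\sigma_1 \ge \cdots \ge \sigma_{lK} \ge 0$, the rank-truncated HOSVD tensor $\hat{\tensor A}$ obtained by keeping the top $K'$ components satisfies
\begin{equation}
\left\| \tensor A - \hat{\tensor A} \right\| \le \left( \sum_{i=K'+1}^{\rank \tensor A} \sigma_i^2 \right)^{\frac12},
\end{equation}
which is precisely \citet[Property~10]{DeLathauwer2000} (the multilinear analogue of Eckart–Young–Mirsky; unlike the matrix case this is an inequality rather than an equality). Combining the two steps: choose $\bhh \in \hcnn^{(l,K,\set{M_k})}$ with $T_{l^K}(\rhh) = \hat{\tensor A}$, which is possible by step one since $\rank \hat{\tensor A} \le K' < \rank \tensor A$ and the channel budget realises $K'$ rank-one terms; then $\inf_{\bhh} \| \tensor A - \ten{\rhh} \| \le \| \tensor A - \hat{\tensor A} \|$ is bounded by the right-hand side, giving the claim.

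I expect the main obstacle to be the bookkeeping in step one: establishing carefully that the iterated dilated convolution with the prescribed dilation rates $d_k = l^k$, followed by restriction to $[0, l^K-1]$ and column-major tensorisation, implements exactly a Kronecker/outer-product decomposition, so that channel paths correspond to rank-one tensor terms and the number of channels $\set{M_k}$ translates into the multilinear rank budget. This requires being precise about how the column-major ordering interacts with the dilation structure — essentially showing that $\Conv{}_{l^k}$ acts as a tensor product on the appropriate factor — and checking that distinct channel paths can be made to yield arbitrary (not merely aligned) rank-one terms, exactly as the rank-2 case in the Example shows with the $\sigma_1, \sigma_2$ rescaling. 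The HOSVD truncation bound itself is off-the-shelf, so no new analysis is needed there; one only has to be careful that the statement requires $K' < \rank \tensor A$ (otherwise the sum on the right is empty and the infimum is zero, trivially).
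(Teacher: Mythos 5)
Your proposal matches the paper's argument: the paper likewise establishes that iterated dilated convolutions tensorise to outer products (so that CNN representations realise sums of rank-one terms and hence any tensor of bounded multilinear rank), and then invokes the HOSVD truncation bound of \citet{DeLathauwer2000} as the analogue of Eckart--Young--Mirsky, combining the two exactly as you describe. The only cosmetic difference is that the paper packages your ``step one'' as two separate propositions (the outer-product identity proved by induction on $K$, then the realisability of arbitrary tensors via the HOSVD outer-product form), but the substance is identical.
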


Based on Lemma \ref{lmm:lowrankapp_tensor},
one can measure the complexity of a target $\bm H$
by the decay rate of singular values of
$\ten{\rh}$.

\paragraph{Complexity measure and approximation rates.}

Now we can get down to define appropriate complexity measures and prove approximation rates of CNNs.
Based on previous discussions on HOSVD, we have the following key definition.

\begin{definition}\label{def:Hnorm}
	Consider a sequence of functionals 
	$\bm H$ with associated representation 
	$\rh$.
	For any $l,K\in\mathbb N_+$, let 
	$\sigma_1\depen{K}\ge\sigma_2\depen{K}\ge\cdots\ge\sigma_{lK}\depen{K}\geq 0$ 
	be the singular values of 
	$\ten{\rh}$.
	Let $g \in \bm c_0(\mathbb N, \mathbb R_{+})$ be a non-increasing function with zero limit at infinity.
Define the complexity measure of $\bm H$ by
\begin{equation}\label{eq: Hnorm}
\begin{aligned} 
	C\depen{l,g}(\bm H) = \inf \Bigg\{c: \left(\sum_{i=s+K}^{lK}\abs{\sigma_i\depen{K}}^2\right)^{\frac 1 2} \leq c g(s), \\ ~s\geq 0, K\ge 1\Bigg\}.
\end{aligned}
\end{equation}
\end{definition}
Based on this complexity measure, we define a concept space with certain spectral regularity (measured by the decay of singular values)
\begin{equation}
	\cs\depen{l,g}:=\set{\bm H \in \mathcal C: C\depen{l,g}(\bm H)<\infty}.
\end{equation}

Next, we discuss some facts and examples to enhance the understanding of the space $\cs\depen{l,g}$.
\begin{remark}
Suppose the function $g$ is monotonously decreasing and strictly positive.
Then for any $\bm H\in\cs$ such that $\rh$ is finitely supported, 
we have $\bm H \in \cs\depen{l,g}$.
\end{remark} 
\begin{remark}
Suppose $\bm H\in\cs$ with $\rh$ finitely supported.
Then there exists a finitely supported decreasing $g$ such
that $\bm H \in \cs\depen{l,g}$.
\end{remark}
The details can be found in appendix.

The reason why the summation in \eqref{eq: Hnorm} starts at $s+K$ is that,
when $K$ increases, there will be additional singular values resulting from the additional matrix SVDs.
These singular values are all equal to 
the 2-norm of $\rh$.
We show this by the following example.

\begin{example}
Suppose the filter size $l=2$
and $\rh =$ 
$(1,0,0,1,0,0,0,\cdots)$.
The following table shows the corresponding singular values of $\ten{\rh}$ for different $K$:
\begin{center}
\begin{tabular}{ c r }
 K & Singular Values of $\ten{\rh}$ \\ 
 1 & $(1,1)$  \\  
 2 & $(1,1,1,1)$ \\
 3 & $(\sqrt{2},1,1,1,1,0)$\\
 4 & $(\sqrt{2},\sqrt{2},1,1,1,1,0,0)$
\end{tabular}
\end{center}
When $K\geq 2$, $[0,l^K-1]$ covers the support of $\rh$.
If we further increase $K$, 
there will be additional singular values all equal to $\norm{\rh}_2 = \sqrt{2}$.

For $K \geq 2$, we have
\begin{equation}\label{eq: sigular example}
    \sum_{i=s+K}^{lK}\abs{\sigma_i\depen{K}}^2 = 
            \begin{cases}
                2, &  s = 1 \\
                1,      &   s =2.\\
                0,    & \text{otherwise}
            \end{cases}
\end{equation}
Then given $g$ one can compute $C\depen{l,g}(\bm H)$ 
based on \eqref{eq: Hnorm} and \eqref{eq: sigular example}.
\end{example}

Based on Definition \ref{def:Hnorm}, we can now present our main result on the approximation rate of CNNs.

\begin{theorem}{\normalfont{(Approximation rate for CNNs)}}\label{thm: error bound}
Fix $l\geq 2$ and $g \in \bm c_0(\mathbb N, \mathbb R_{+})$.
For any $\bm H  \in \cs\depen{l,g}$ and any set of parameters $(K,\set{M_k})$,
we have
\begin{equation}
	\begin{aligned}
		 \frac 1 {\sqrt d} \sup_{t \in [l^K,\infty]} \abs{\rh(t)}
		\leq 
		 \inf_{\bhh \in \hcnn\depen{l,K,\set{M_k}}}\norm{\bm H - \bhh} 
		 \leq \\
		 d \ g(KM^{\frac 1 K}-  K)  C\depen{l,g}(\bm H)
		   + { \norm{\rh_{[l^K,\infty]}}_2},
	\end{aligned}
\end{equation}
where $M :=  \frac 1 d(\sum_{k=2}^K M_k M_{k-1} -lK)$ denotes the effective number of filters.
\end{theorem}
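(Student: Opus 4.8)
The strategy is to transport the problem to the sequence level via Lemma~\ref{lmm:representation}, split the error according to the finite support of the CNN representation, and recognise the remaining part as a tensor low-rank approximation governed by the spectral complexity of Definition~\ref{def:Hnorm}.

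\emph{Reduction and the lower bound.} I would first record the identity $\norm{H_t-\widehat H_t}=\norm{\rh-\rhh}_2$: since $(H_t-\widehat H_t)(\bm x)=\sum_{s\ge0}\big(\rho\depen{\bm H}(s)-\rho\depen{\bhh}(s)\big)^{\top}x(t-s)$, Cauchy--Schwarz and its equality case (take $\bm x$ vanishing after time $t$, with $\norm{\bm x}_{\mathcal X}\le1$) give both directions, and time-homogeneity of $\bm H$ and of every $\bhh\in\hcnn\depen{l}$ makes the supremum over $t$ in $\norm{\bm H-\bhh}$ redundant. Because $r(\rhh)=l^K-1$ for all $\bhh\in\hcnn\depen{l,K,\set{M_k}}$, this splits orthogonally as
\[
\norm{\rh-\rhh}_2^2=\norm{\rh_{[0,l^K-1]}-\rhh}_2^2+\norm{\rh_{[l^K,\infty]}}_2^2 .
\]
The lower bound is then immediate: since $\rhh$ vanishes on $[l^K,\infty]$, testing $H_t-\widehat H_t$ against a unit input supported at a single time index $\ge l^K$ shows $\norm{\bm H-\bhh}\ge|\rh(s)|$ for every $s\ge l^K$, which already implies the stated bound (with room to spare over the $1/\sqrt d$ factor). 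The term $\norm{\rh_{[l^K,\infty]}}_2$ is unavoidable and is exactly the additive term of the upper bound, so it remains only to estimate $\inf_{\bhh}\norm{\rh_{[0,l^K-1]}-\rhh}_2$ over the prescribed architecture.

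\emph{Upper bound via tensorisation.} Since $T_{l^K}$ is an isometry for the Frobenius norm, this quantity equals $\inf_{\bhh}\norm{\ten{\rh}-\ten{\rhh}}$. The central step is to unroll the nested dilated convolutions of \eqref{eq: linearhcnn}: because the dilation rates are $d_k=l^k$, the composition is precisely a Tucker-type contraction, so the rank of the mode-$k$ flattening of $\ten{\rhh}$ is controlled by the channel ``cut'' at layer $k$ (roughly $M_{k-1}M_k$, and always at most $l$), while the $d$ input channels at layer $0$ contribute the extra size-$d$ mode --- this is the origin of the factor $\tfrac1d$ in the definition of $M$ and of the leading $d$ in the bound. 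Generalising the explicit SVD-to-filter construction of the example above to arbitrary $K$ via HOSVD, one shows that for any truncation level $K'$ below a quantity determined by $\set{M_k}$ there exist filters realising the rank-$K'$ HOSVD truncation of $\ten{\rh}$; a convexity/AM--GM argument over the allocation of the filter budget $\sum_{k=2}^K M_kM_{k-1}$ among the $K$ layers shows that $K'$ can be taken of order $KM^{1/K}$. Feeding this into Lemma~\ref{lmm:lowrankapp_tensor} gives
\[
\inf_{\bhh}\norm{\ten{\rh}-\ten{\rhh}}\le\Big(\textstyle\sum_{i=K'+1}^{\rank\ten{\rh}}|\sigma_i\depen{K}|^2\Big)^{1/2},
\]
and choosing the shift $s=K'-K$ in Definition~\ref{def:Hnorm} (so the sum starts at $s+K=K'+1$), with $g$ non-increasing, bounds the right-hand side by $C\depen{l,g}(\bm H)\,g(KM^{1/K}-K)$. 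Adding back $\norm{\rh_{[l^K,\infty]}}_2$ and carrying the factor $d$ through the $d$-channel reduction yields the stated upper bound.

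\emph{Main obstacle.} The hard part is the structural/constructive step: pinning down exactly which order-$K$ tensors $\ten{\rhh}$ a depth-$K$, width-$\set{M_k}$ dilated CNN can realise, and showing this family contains the HOSVD truncations at level $\Theta(KM^{1/K})$. This needs (i) a careful unrolling of the $d_k=l^k$ dilated convolutions to expose the tensor-network structure and the precise channel-to-mode-rank correspondence; (ii) a constructive conversion of HOSVD factor matrices and core tensor into admissible filters --- the $K=2$ SVD construction is the prototype, but the bookkeeping with exponentially growing dilation and $d$ input channels is delicate; and (iii) the combinatorial optimisation of the per-layer channel allocation under the budget $\sum_{k=2}^K M_kM_{k-1}$, including the integer-rounding slack absorbed into the constant $d$. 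The remaining ingredients --- the Riesz-type reduction, the orthogonal split by support, the isometry of $T_{l^K}$, and the direct application of Lemmas~\ref{lmm:representation} and~\ref{lmm:lowrankapp_tensor} --- are routine.
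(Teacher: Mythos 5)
Your proposal follows essentially the same route as the paper's proof: the same Riesz/support split of the error at $l^K$, the same delta-input argument for the lower bound, and the same tensorisation--HOSVD truncation (Lemma~\ref{lmm:lowrankapp_tensor}) combined with an AM--GM count of achievable mode-ranks against the filter budget $\sum_{k=2}^K M_kM_{k-1}$ for the upper bound. The constructive step you flag as the main obstacle is exactly what the paper supplies via its outer-product representation of stacked dilated convolutions (Propositions~\ref{prop: outer product form} and~\ref{prop: Cnn tensor form}), so your outline is sound and matches the paper's argument.
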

As discussed earlier,
the term $\norm{\rh_{[l^K,\infty]}}_2$
is the error on $[l^K, \infty]$, due to the limited support of depth-$K$ CNNs.
This error can be reduced by increasing the number of layers $K$,
and it is less important as $l^K$ increases exponentially fast.

Hence, the subsequent discussion will focus on the more interesting term, i.e. the approximation error on the interval $[0,l^K-1]$: $d \ g(KM^{\frac 1 K}-  K)  C\depen{l,g}(\bm H)$.
This is related to the complexity measure of $\bm H$, which is small if there exists effective low rank structures in the target.
With the fact that $\rank{\ten{\rhh}}$ is at least 
$KM^{\frac 1 K}$ (see details in appendix),
combining Lemma \ref{lmm:lowrankapp_tensor} and \eqref{eq: Hnorm} gives the result.
Given a target $\bm H$,
this error can be reduced by either increasing the number of filters
$M$ or 
the number of layers
$K$.

Whether a target $\bm H$ can be easily approximated 
depends on $C\depen{l,g}(\bm H)$,
which is determined by the decay of singular values of $\ten{\rh}$.
For $\ten{\rh}$ with fast decaying singular values,
even if the rank is large or $\rh$ decays slowly,
one can still have a good approximation with CNNs. This is very different from previous results of RNNs.

We now introduce an example to illustrate this observation.
Consider 3 targets with the following representations
\begin{equation*}
    \begin{gathered}
        \rho_1(t)=\begin{cases}
        \frac {\pi^2} {12}, &  t = 17,18,25,26 \\
        0,     & \text{otherwise}
        \end{cases}
     ,  \\
    \rho_2(t)=\begin{cases}
        \frac {\pi^2} {12},  & t = 9,15,19,26 \\
        0,      & \text{otherwise}
    \end{cases}, \ \  \rho_3(t) = \frac 1 t.
    \end{gathered}
\end{equation*}
Note that all of them are instances of the adding problem. Moreover, 
they all have the same magnitude,
$\norm{\bm \rho_1}_2=\norm{\bm \rho_2}_2=\norm{\bm \rho_3}_2$.
The support of both $\bm \rho_1, \bm\rho_2$ are finite, 
while $\bm \rho_3$ have an
infinite support.
Furthermore, $\bm \rho_1$ and $\bm \rho_2$
have the same number of non-zero entries.
We plot the error bound of each targets in 
\figref{fig:error}.

First, from the figure we conclude that
the approximation can be reduced by 
increasing $K$ or $M$.
For a fixed $K$, 
when $M$ is sufficiently large such that the rank of the model is no less than the rank of the target, 
the error will no longer decrease if we further increase 
$M$. At this point
the error only comes from the tail which 
can only be reduced by increasing $K$ (hence increasing the support, or receptive field of the CNN model).

Next, we compare $\bm{\rho_1}$ and $\bm{\rho_2}$.
From \figref{fig:error}, we conclude that 
$\bm{\rho_1}$ is easier to approximate than $\bm{\rho_2}$.
The only difference between these two targets is 
the position of their non-zero entries,
which results in $T_{2^5}(\bm{\rho_1})$ being rank 5 while 
$T_{2^5}(\bm{\rho_2})$ being rank 10.
This is the case where low rank structures facilitates the approximation by CNNs.

Finally, we make a comparison between $\bm{\rho_2}$  and $\bm{\rho_3}$.
We conclude that $\bm{\rho_3}$ is easier to approximate
as the overall error is smaller than $\bm{\rho_2}$, in spite of the fact that $\bm{\rho_2}$ is finitely supported and 
has a simple form.
The reason is that the 
tail sum of
singular values of $\ten{\bm{\rho_3}}$ decays faster than that of $\ten{\bm \rho_2}$.
This illustrates the case where a target with fast decaying singular values (i.e. low effective rank) can
be easily approximated.

\begin{figure*}[t]
\centering
\subfigure[$\bm \rho_1$]{
{\epsfig{file = 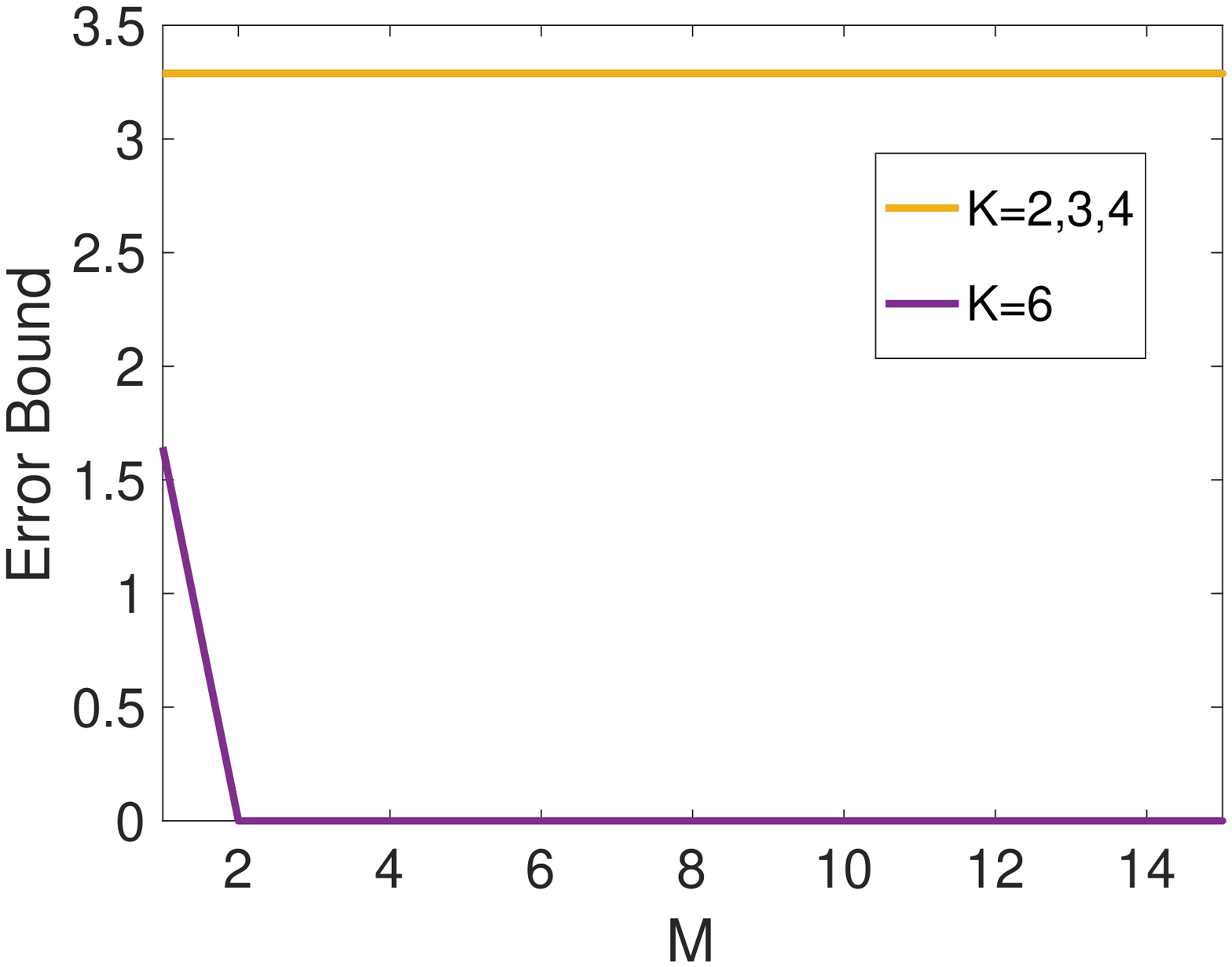 , width = 5.5cm}}
\label{fig:rank5}%
}
\subfigure[$\bm \rho_2$]{%
{\epsfig{file = 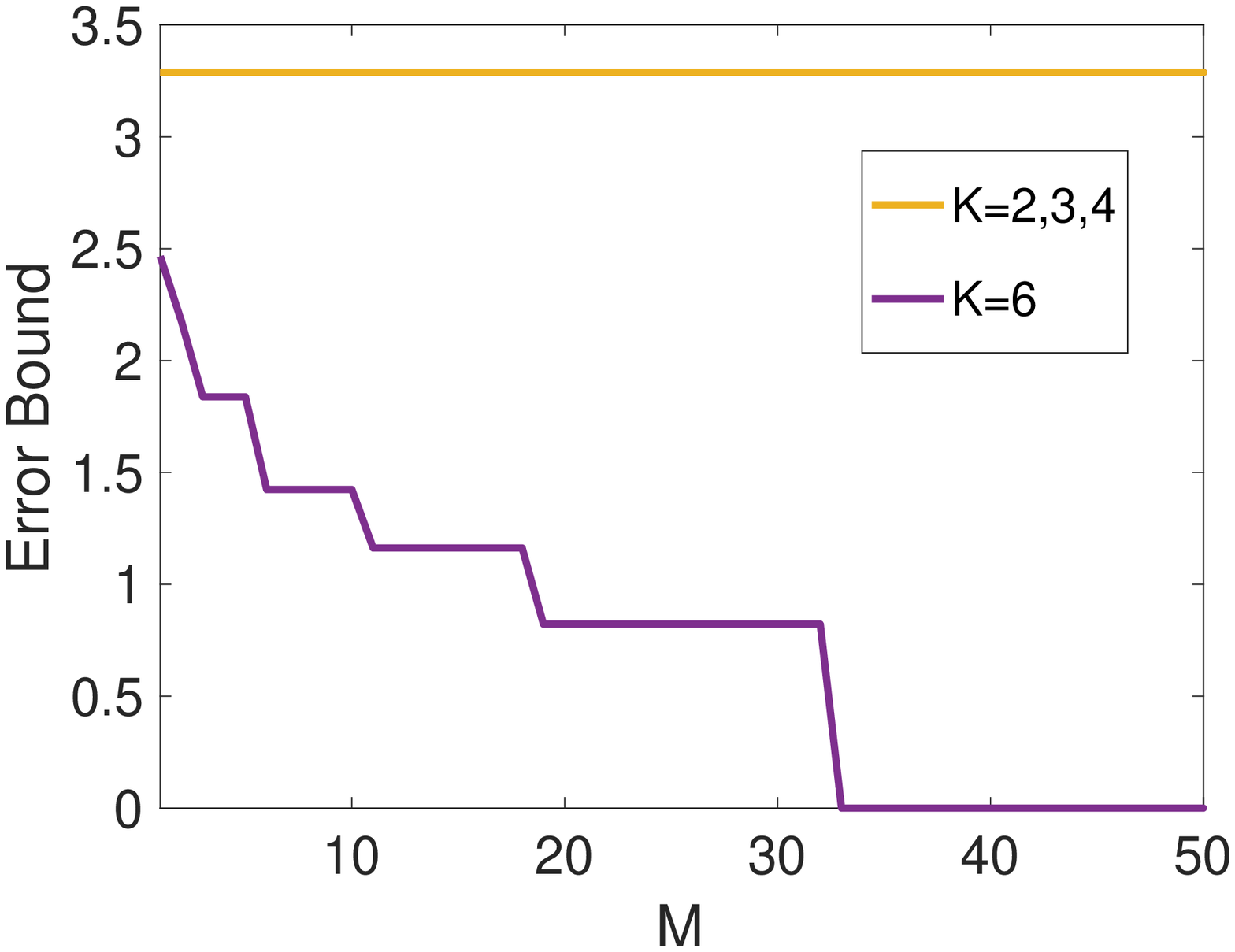, width = 5.6cm}}
\label{fig:rank10}%
}
\subfigure[$\bm \rho_3$]{%
{\epsfig{file = 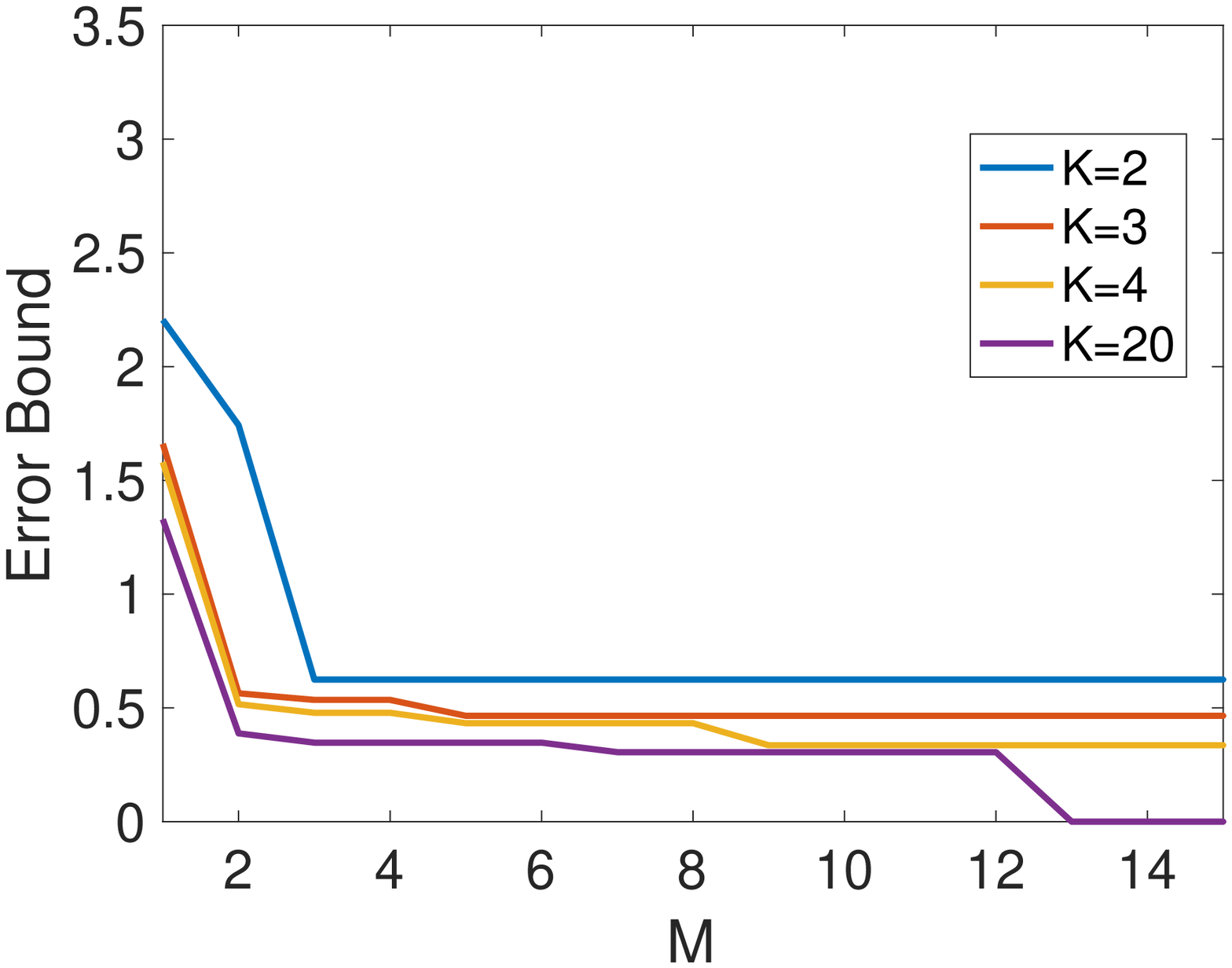, width = 5.5cm}}
\label{fig:infinite}%
}
\caption{
Let $l=2$.
We plot the upper bound of the approximation errors against the effective number of filters.
Each line corresponds to a fixed $K$. 
The error bound is calculated based on Theorem \ref{thm: error bound}, 
where the tail term is calculated directly and the rank term is determined by numerically compute the singular values of 
$\ten{\bm \rho_i}$.
These curves describe, according to Theorem \ref{thm: error bound}, the decay of approximation errors of CNNs as one increases the number of filters. Observe that the decay depends on the low rank structures of the targets.
}
\label{fig:error}
\end{figure*}

\section{Comparison of CNNs and RNNs in terms of approximation}
\label{sec:comparison}



As motivated earlier, how to judiciously choose architectures for time series modelling is an important practical problem, 
and prior works only address it empirically.
Combining the approximation theory for CNNs developed here and those for RNNs in \citet{li2021on},
we are in a position to provide theoretical answers to such problems, 
albeit in specific settings. This is the purpose of the present section.


In short, our theoretical analyses show that neither CNNs
nor RNNs are always better than the other.
Whether one out-performs its counterpart depends on the properties of the target relationship.
Our results make this statement precise, and provide general guidelines 
on how to select architectures based on applications.
We begin by presenting two representative examples for the two aforementioned cases,
from which we can infer key differences between RNNs and CNNs for time series modelling.

\paragraph{Example where RNNs out-perform CNNs.}
We assume a scalar input with $d=1$.
Consider a target $\bm H\in \cs$ 
with the representation $\rho^{\bm H}(t) = \gamma^t$, where $ 0<\gamma<1.$
It is easy for RNNs to approximate this target, since the representation has a power form. In fact, we have $\bm H \in \hrnn^{(1)}$, i.e. a RNN with one hidden unit is sufficient to achieve an exact representation,
with 0 approximation error for any $\gamma$.

For a CNN $\bhh \in \hcnn\depen{l}$, based on the lower bound of the approximation error from Theorem \ref{thm: error bound}, we have that
$
	\norm{\bm H - \bhh}^2 \geq  \sup_{t \in [l^K,\infty]} \norm{\rh(t)}_2.
$
Thus, in order to achieve an approximation error with $\norm{\bm H - \bhh}^2 < \epsilon$, we have
$
	 \sup_{t \in [l^K,\infty]} \norm{\rh(t)}_2 =  \gamma^{l^K}
	< \epsilon.
$
This implies $l^K \geq \frac{\log ( \epsilon )}{\log (\gamma )}$.
That is to say, the  necessary number of layers to 
achieve an approximation error smaller than $\epsilon$ 
diverges to infinity as $\gamma$ approaches 1.

\paragraph{Example where CNNs out-perform RNNs.}
Still assume a scalar input with $d=1$.
Consider a target $\bm H \in \cs$ with representation
$
    \rho^{(\bm H)}(t) = \delta(t-2^K)
$
where  $K \in \mathbb N_+$. 
This is an discrete time impulse at $t = 2^K$ and can be regarded
 as the copying memory problem \citep{Bai2018}:
 given an input $\bm x$, the output $\bm y$ satisfies $y(t) = x(t - 2^K)$. 
We have $\bm H \in \hcnn^{(2,K,\set{1})}$.
Thus, a $K$-layer CNN with one channel per layer
is sufficient to achieve an exact representation. 

Recall that RNN approximates the target $\rh$ with a power sum $\rhh(s) = c^\top W^{s-1}U$. 
Suppose here $W \in \mathbb R^{m \times m}$ is a diagonalisable matrix with negative eigenvalues.
The latter has a special structure, which makes it difficult to approximate certain functions.
Suppose $u$ is a $m$-term power sum, i.e.
$
	u(t) = c_0 + \sum_{i=1}^m c_i\  \gamma_i^{ t}.
$
Then according to \citep{Borwein1996}, we have
\begin{equation}\label{eq: exp property}
	\abs{u(t+1)-u(t)} \leq \frac{2m}{t} \sup_{s\geq 0} u(s).
\end{equation}
For a fixed number of terms $m$,
the changes between $u(t+1)$ and $u(t) $ approaches zero
as $t$ approaches infinity.
This means that if there is a sudden change in $u$ far from origin, the number of terms $n$ must be large.

For a RNN $\bhh \in \hrnn\depen{m}$, assume 
$
	\norm{\bm H - \bhh}  < \epsilon.
$
Let $u(s) = c^\top W^{s-1}U$ denote the power sum corresponding to $\bhh$.
Since $W$ is a $m \times m$ matrix,
$u(s)$ is a $m^2$-term power sum.
We have that
$
	\abs{u(2^K+1)-u(2^K)} > 1-2 \epsilon,
$
and hence \eqref{eq: exp property} implies
$
	m^2 > 2^{K-1}\frac{ 1-2\epsilon}{1+\epsilon}.
$
As $k$ increases,
the number of parameters needed for RNNs
to achieve an error less than $\epsilon$
increases exponentially, 
while this increment is linear for CNNs.

For these examples,
the targets actually come from the other respective hypothesis spaces.
This means that the functionals these two hypothesis spaces can represent are quite different.
The difference between RNNs and CNNs comes from 
their underlying structure:
the RNN uses a power sum to approximate the target, while the
CNN uses a finitely supported function.
Therefore,
RNNs are good at approximating targets with exponentially decaying structures, but it is not efficient to handle targets with sudden changes. On the other hand, CNN works well for targets with low ranks or fast decaying singular values. However, it can become inefficient if the tail error term is significant, and at the same time, the truncated term does not possess low rank structures.

{
In practice, the property of decay and sparsity can be checked by querying the target relationship with proper inputs. 
However, for a more general situation when there is only access to sampled data for input/output, inferring these underlying properties of targets remains challenging.

}





\section{Conclusion}

In this paper,
we studied the approximation properties of convolutional architectures when applied to time series modelling.
We considered the simple but representative linear setting.
The approximation error is characterised by both the memory and the spectrum of the target relationship based on a tensorisation argument.
We concluded that a target with a low rank property or fast decaying spectrum can be efficiently approximated by a CNN model.


In particular, these results for CNNs, together with previous results for RNNs, provide basic theoretical insights into the problem of architectural choice for time series modelling: the RNN exploits exponential decaying structures, whereas the CNN exploits low-rank structures.
This forms the first basic step towards a concrete understanding of the interplay between different neural network architectures and the structures of temporal relationships under investigation.

\section*{Acknowledgements}
We would like to thank the anonymous reviewers for their
constructive comments.
HJ is supported by Nationa University of Singapore under PGF scholarship.
QL is supported by the National Research Foundation, Singapore, under
the NRF fellowship (NRF-NRFF13-2021-0005).

\bibliography{reference}
\bibliographystyle{packages/icml2021}

\onecolumn
\appendix
\chead{\textbf{Approximation Theory of CNN for Times Series Appendix }}
\begin{center}
    {\Large  Apppendix}
\end{center}
\section{Tensors and Higher Order Singular Value Decomposition}
As a higher order analogue of Singular Value Decomposition (SVD) for matrices,
Higher Order Singular Value Decomposition (HOSVD) for tensors
is the main tool to develop our theorems. 
In this section, we give a quick review of tensors,
and introduce the essential part of HOSVD which helps better understand the theorems.
The contents here are based on \cite{DeLathauwer2000} and \cite{Kolda2009}. 

\paragraph{Basic Notations}
Below is a table for different notations.

\begin{center}
\begin{tabular}{ |c|c|c| } 
 \hline
Type & Notation & Examples \\ 
\hline
 Tensor & Boldface Euler script letter & $\bm{\mathcal A}$ \\ 
 Matrix & Boldface capital letter & $\bm A$ \\
 Vectors &  Boldface lowercase letters & $\bm a$ \\
 Scalars & Lowercase letters & $a$\\
  \hline
\end{tabular}
\end{center}
The \textit{order} of a tensor is the number of its dimensions, which is also called modes. 
We use subscripts on corresponding notations to denote a specific slice of a tensor.
For example, 
$\tensor A \in \mathbb R^{4\times2\times4\times5}$ is an order 4 tensor,
and $a_{ijkl}$ denotes the $(i,j,k,l)$-element of $\tensor A$.
The vector coordinated along the first dimension is denoted as $\bm a_{:jkl}$, and the matrix coordinated along the first and the second dimension is denoted as
$\bm A_{::kl}$.

The norm of a tensor $\tensor A \in \mathbb R^{I_1 \times I_2 \times \cdots \times I_K}$ is defined by 
\begin{equation}
    \norm{\tensor A} = \sqrt{\sum_{i_1=1}^{I_1} \cdots \sum_{i_K=1}^{I_K}a^2_{i_1 \cdots i_K}},
\end{equation}

Let $\tensor A \in \mathbb R^{I_1\times\cdots I_K}$ and $\tensor B \in \mathbb R^{J_1\times \cdots \times 
J_{K'}}$ be two tensors, then the outer product of $\tensor A$,  $\tensor B$ is
a tensor in $\mathbb R^{I_1\times\cdots \times I_K\times J_1\times \cdots \times 
J_{K'}}$,
which is defined as
\begin{equation}
    (\tensor A \otimes \tensor B)_{i_1\dots i_Kj_1\dots j_{K'}}
    = a_{i_1\dots i_K}b_{j_1\dots j_{K'}}.
\end{equation}


\paragraph{Tensor reshaping}
The tensor reshaping refers that one can reshape a tensor into a matrix, or a matrix into a tensor based on some specific rules.
The process of reshaping a tensor into a matrix is called tensor flattening, while reshaping a matrix into a tensor is called tensorisation.
Tensor reshaping is the core idea of HOSVD.

\begin{definitionAppendix}
Let $\tensor A \in \mathbb R^{I_1 \times I_2 \times \cdots \times I_K}$
be an order K tensor.
The mode-k flattening of $\tensor A$ is denoted as $\bm A_{(k)} \in \mathbb R^{I_k \times (I_1 \cdots I_{k-1} I_{k+1}\cdots I_{K})} $, where the $(i_1,i_2,\cdots,i_K)$-element of $\tensor A$ is mapped to the $(i_k, j)$-element of the matrix $\bm A_{(k)}$ with
\begin{equation}
    j = 1+\sum_{\substack{s=1\\ s\neq k}}^K \Bigg ((i_s-1) \prod_{\substack{s'=1\\s'\neq k}}^{s-1}I_{s'} \Bigg).
\end{equation}
  That is, the columns of $\bm A_{(k)}$ are actually the vectors $\bm a_{i_1\cdots i_{k-1}: \ i_{k+1}\cdots i_K}$.
\end{definitionAppendix}
\begin{definitionAppendix}
Let $\bm A \in \mathbb R^{I_K \times ( I_1I_2\cdots I_{K-1})}$ be a matrix. 
Then
$\tensor A \in \mathbb R^{I_1 \times I_2 \times \cdots \times I_K} $ is the  tensorisation of $\bm A$ 
if its mode-K flattening equals to $\bm A$.
\end{definitionAppendix}
We illustrate the above two definitions by an example.

\begin{exampleAppendix}
    Consider an order 3 tensor $\tensor A \in \mathbb R^{4\times 3 \times 2}$ such that
    \begin{equation}
        \bm A_{::1} = 
        \begin{pmatrix}
            1 &2& 3\\
            4 &5&6\\
            7 & 8 & 9\\
            10&11&12
        \end{pmatrix}, \quad
        \bm A_{::2} = 
        \begin{pmatrix}
            13 &14& 15\\
            16 &17&18\\
            19 & 20 & 21\\
            22&23&24
        \end{pmatrix}.
    \end{equation}
    Then
    \begin{equation}
    \begin{gathered}
        \bm A_{(1)} =
        \begin{pmatrix}
        1 & 2 &3& 13 & 14 & 15  \\
        4 & 5 & 6 & 16 & 17 & 18  \\
        7&8&9&19&20&21\\
        10&11&12&22&23&24
        \end{pmatrix},\\
        \bm A_{(2)} =
        \begin{pmatrix}
        1 & 4 &7& 10 & 13 & 16&19&22  \\
        2 & 5 & 8 & 11 & 14 & 17 &20&23  \\
        3&6&9&12&15&18&21&24
        \end{pmatrix},\\
    \bm A_{(3)} =\begin{pmatrix}
    1 & 4 & 7 &10 &\cdots & 3 & 6 & 9 & 12\\ 
    13 & 16 & 19 &22& \cdots  & 15 & 18 & 21 & 24
    \end{pmatrix}.
    \end{gathered} 
    \end{equation}
    
Conversely, $\tensor A$ is the tensorisation of $\bm A_{(K)}$ in 
$\mathbb R^{4\times 3 \times 2}$.
\end{exampleAppendix}

Recall the tensorisation $T_{l^K}(\bm a)$ we defined in the paper, where $\bm a \in \mathbb R^{l^K}$ is a vector. 
In this case, we first rearrange the vector $\bm a \in \mathbb R^{l^K}$ into a matrix $\bm A \in \mathbb R^{l\times l^{K-1}}$ according to row major ordering, 
then $T_{l^K}(\bm a)$ is the tensor in $\mathbb R^{l\times l\cdots \times l}$ defined as the tensorisation of $\bm A$.

\paragraph{Singular values and the rank of tensors}
\begin{definitionAppendix}\label{def:sv_tensor}
The singular values of a tensor  $\tensor A\in \mathbb R^{I_1 \times \cdots \times I_K}$ are defined as
\begin{equation}
    \set{\sigma_{i_{k}}\depen{k}}, \quad k = 1,\dots,K,
\end{equation}
where $\set{\sigma_{i_k}\depen{k}}$ are the singular values of $\bm A_{(k)}$ arising from the SVD for matrices.
That is, the singular values of $\tensor A$ is a collection of all the singular values of its mode-$k$ flattening.
\end{definitionAppendix}
\begin{remarkAppendix}
    The norm of $\tensor A$ satisfies 
    \begin{equation}\label{tensor norm}
       \norm{\tensor A} = \norm{\bm A_{(k)}}_F = \sqrt{{\sum_{i_k=1}^{r_k}{\left(\sigma_{i_k}\depen{k}\right)}^2}},
     \quad \quad k = 1,2, \dots, K,
    \end{equation}
    where $r_k$ is the matrix rank of $\bm A_{(k)}$.
\end{remarkAppendix}

There are various ways to define the rank of a tensor.  
Here we use the following definition.
\begin{definitionAppendix}
The rank of a tensor $\tensor A$ is defined as 
\begin{equation}
    \rank{\tensor A} := \sum_{k=1}^K r_k.
\end{equation}
Here $r_k$ is the matrix rank of $\bm A_{(k)}$, which is also called the $k$-rank of $\tensor A$.
Recall that the matrix rank equals to the number of its non-zero singular values, it follows from Definition \ref{def:sv_tensor} that the tensor rank also equals to the number of its non-zero singular values.
\end{definitionAppendix}

\begin{remarkAppendix} \label{rmk: hosvd form}
    Recall that the matrix SVD can be written in the form of outer product:
$
   \displaystyle \bm A = \sum_{i=1}^{r} \sigma_i \bm u_i \bm v_i^\top
$ with $r = \rank \bm A$.  
This can be generalised to HOSVD. For any $\tensor A \in \mathbb R^{I_1 \times I_2 \times \cdots \times I_K}$, we have
\begin{equation}
    \tensor A = \sum_{i_1=1}^{r_1}\sum_{i_2=1}^{r_2}\cdots \sum_{i_K=1}^{r_K}
    s_{i_1i_2\dots i_K} \bm u^{(1)}_{i_1} \otimes \bm u^{(2)}_{i_2}\otimes
    \cdots \otimes \bm u^{(K)}_{i_K},
\end{equation}
where $r_k$ is the $k$-rank of  $\tensor A$,  $s_{i_1i_2\dots i_K} \in \mathbb R$ and ${\bm u_{i_k}^{(k)}} \in \mathbb R^{I_k}$.
\end{remarkAppendix}

We end up with the following approximation property of HOSVD, which can be viewed as an analogue of Eckart-Young-Mirsky theorem for matrix SVD.
\begin{propositionAppendix} \label{prop: hosvd error}
Let $\tensor A, \tensor{\hat A}\in \mathbb R^{I_1 \times \cdots \times I_K}$ with the $k$-ranks denoted as $r_1,\dots,r_K$ and $r'_1,\dots, r'_K$
respectively.
Let $\sigma_1\depen{k} \geq \sigma_2\depen{k}\geq \cdots \geq \sigma_{r_k}\depen{k}\geq 0$ be the singular values of $\bm A_{(k)}$,
we have
\begin{equation}
    \inf_{\tensor{\hat A}}~\norm{\tensor A - \tensor{\hat A}}^2 \leq \sum_{i_1 = r'_1 + 1}^{r_1} \left(\sigma^{(1)}_{i_1}\right)^2 + 
    \sum_{i_2 = r'_2 + 1}^{r_2} \left(\sigma^{(2)}_{i_2}\right)^2+ \cdots +
    \sum_{i_K = r'_K + 1}^{r_K} \left(\sigma^{(K)}_{i_K}\right)^2,
\end{equation}
where the infimum is taken over $\tensor{\hat A}\in\mathbb R^{I_1 \times \cdots \times I_K}$ such that 
$r'_k\le r_k$, $k=1,\dots K$.

\end{propositionAppendix}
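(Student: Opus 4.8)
The plan is to prove the inequality by exhibiting an explicit competitor $\tensor{\hat A}$ produced by a truncated HOSVD and then bounding its error through a mode-by-mode telescoping argument, following the standard proof of the HOSVD truncation bound in \cite{DeLathauwer2000}. For each mode $k$, let $\bm P^{(k)} \in \mathbb R^{I_k \times I_k}$ be the orthogonal projector onto the span of the top $r'_k$ left singular vectors of the flattening $\bm A_{(k)}$. Write $\tensor X \times_k \bm B$ for the mode-$k$ product, i.e. the tensor obtained by replacing every mode-$k$ fiber of $\tensor X$ by its image under $\bm B$, equivalently $(\tensor X \times_k \bm B)_{(k)} = \bm B\,\bm X_{(k)}$. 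Define $\tensor{\hat A} := \tensor A \times_1 \bm P^{(1)} \times_2 \bm P^{(2)} \cdots \times_K \bm P^{(K)}$. Since projecting mode $k$ forces the columns of the mode-$k$ flattening into an $r'_k$-dimensional subspace, and projecting the other modes acts by right multiplication on $\bm A_{(k)}$ and cannot increase its column rank, $\tensor{\hat A}$ has $k$-rank at most $r'_k$ for every $k$, so it is admissible in the infimum.

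To control $\norm{\tensor A - \tensor{\hat A}}$, set $\tensor B_0 := \tensor A$ and $\tensor B_j := \tensor A \times_1 \bm P^{(1)} \cdots \times_j \bm P^{(j)}$ for $j = 1,\dots,K$, so that $\tensor B_K = \tensor{\hat A}$ and $\tensor A - \tensor{\hat A} = \sum_{j=1}^K (\tensor B_{j-1} - \tensor B_j)$, where $\tensor B_{j-1} - \tensor B_j = \tensor B_{j-1} \times_j (\bm I - \bm P^{(j)})$. The crucial point is that these $K$ summands are pairwise orthogonal in the tensor inner product: for $j < l$, flattening in mode $j$ shows that the $j$-th summand has all its columns in $\mathrm{range}(\bm I - \bm P^{(j)})$, while the $l$-th summand — since mode $j$ has already been projected by $\bm P^{(j)}$ in forming $\tensor B_{l-1}$ — has all its columns in $\mathrm{range}(\bm P^{(j)})$; as $(\bm I - \bm P^{(j)})^\top \bm P^{(j)} = 0$, the Frobenius inner product of the two flattenings vanishes. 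By the Pythagorean identity, $\norm{\tensor A - \tensor{\hat A}}^2 = \sum_{j=1}^K \norm{\tensor B_{j-1} - \tensor B_j}^2$.

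It then remains to bound each term. Flattening in mode $j$ gives $\norm{\tensor B_{j-1} - \tensor B_j}^2 = \norm{(\bm I - \bm P^{(j)})(\bm B_{j-1})_{(j)}}_F^2$, and $(\bm B_{j-1})_{(j)} = \bm A_{(j)}\bm Q$ where $\bm Q$ is built from the projectors $\bm P^{(1)},\dots,\bm P^{(j-1)}$ and identity blocks via Kronecker products, hence has operator norm at most $1$. Therefore $\norm{(\bm I - \bm P^{(j)})\bm A_{(j)}\bm Q}_F \le \norm{(\bm I - \bm P^{(j)})\bm A_{(j)}}_F$, and since $\bm P^{(j)}$ retains precisely the top $r'_j$ left singular directions of $\bm A_{(j)}$, we get $\norm{(\bm I - \bm P^{(j)})\bm A_{(j)}}_F^2 = \sum_{i_j = r'_j + 1}^{r_j}(\sigma^{(j)}_{i_j})^2$. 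Summing over $j = 1,\dots,K$ yields the stated bound.

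The main obstacle, and the place where care is needed, is the orthogonality step: one must verify rigorously that the telescoping summands are mutually orthogonal by tracking, at each stage, exactly which modes have already been projected, so that the mode-$j$ fibers of the later summands indeed lie in $\mathrm{range}(\bm P^{(j)})$. This is intertwined with the non-expansiveness bookkeeping in the last step, namely that projecting the earlier modes amounts to right multiplication of $\bm A_{(j)}$ by an operator-norm-$\le 1$ matrix. Both are standard facts about mode products and flattenings, but they must be stated with a fixed indexing convention to avoid sign or ordering slips; the remainder of the argument is routine.
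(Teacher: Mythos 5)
Your proof is correct: the truncated-HOSVD competitor built from the mode-wise projectors, the telescoping decomposition with pairwise-orthogonal summands, and the non-expansiveness of the already-projected modes together give exactly the stated bound. The paper does not prove this proposition itself but cites \citet{DeLathauwer2000}, and your argument is precisely the standard proof from that reference, so there is nothing to reconcile.
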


\begin{remarkAppendix}
  Note that for matrix SVD, Eckart-Young-Mirsky theorem shows that the approximation error equals to the tail sum of singular values, but for HOSVD only the upper bound holds.  
\end{remarkAppendix}

\section{Proofs}
Now we get down to prove all the results shown in the text.

Recall that the input space $\mathcal X$ is a Hilbert space, 
one can apply the standard representation theorem.
\begin{theoremAppendix}
{\normalfont{(Riesz Representation Theorem)}}
For any continuous linear functional $H$ defined on $\mathcal X$,
there exists a unique $\rho \in \mathcal X$
such that
	 \begin{equation}
	 	H(\bm x) = \sum_{s = -\infty}^{\infty} \rho(s)^\top x(s),
	 \end{equation}
and
\begin{equation}
    \norm{H} = \norm{\rho}_{\mathcal X}.
\end{equation}
\begin{proof}
See \citet{Bramwell1979}, Theorem 3.8-1.
\end{proof}
\end{theoremAppendix}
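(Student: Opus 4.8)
The plan is to deduce the statement from the abstract Riesz representation theorem on Hilbert spaces, so the first task is to record that $\mathcal X = \ell^2(\mathbb Z,\mathbb R^d)$ really is a (real) Hilbert space. I would equip it with the inner product $\langle \bm x,\bm y\rangle := \sum_{s\in\mathbb Z} x(s)^\top y(s)$, which is well defined by Cauchy--Schwarz and induces exactly the norm $\norm{\bm x}_{\mathcal X}$ already fixed in the paper. The only point deserving a word is completeness: given a Cauchy sequence $\{\bm x_n\}$ in $\mathcal X$, each coordinate $\{x_n(s)\}_n$ is Cauchy in $\mathbb R^d$, hence converges to some $x(s)$; the resulting $\bm x$ lies in $\ell^2$ since for every finite $N$ one has $\sum_{\abs{s}\le N}\abs{x(s)}^2 = \lim_n \sum_{\abs{s}\le N}\abs{x_n(s)}^2 \le \sup_n\norm{\bm x_n}_{\mathcal X}^2 < \infty$, and $\bm x_n\to\bm x$ in $\norm{\cdot}_{\mathcal X}$ by the standard $\varepsilon/2$ argument splitting off a large finite window.

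With this in hand I would run the classical argument. If $H\equiv 0$, take $\rho=0$. Otherwise $N:=\ker H$ is a proper subspace of $\mathcal X$, closed because $H$ is continuous, so the orthogonal projection theorem gives $N^\perp\neq\{0\}$ and a unit vector $z\in N^\perp$ with $H(z)\neq 0$. For any $\bm x\in\mathcal X$ the vector $\bm x - \frac{H(\bm x)}{H(z)}z$ belongs to $N$, hence is orthogonal to $z$; expanding $\left\langle \bm x - \tfrac{H(\bm x)}{H(z)}z,\, z\right\rangle = 0$ yields $H(\bm x) = H(z)\,\langle \bm x, z\rangle = \langle \bm x,\, H(z)\,z\rangle$, so the claimed representation holds with $\rho := H(z)\,z\in\mathcal X$, i.e. $H(\bm x)=\sum_{s\in\mathbb Z}\rho(s)^\top x(s)$. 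Uniqueness is immediate: if $\rho_1,\rho_2\in\mathcal X$ both represent $H$, then $\langle \bm x,\rho_1-\rho_2\rangle = 0$ for all $\bm x$, and the choice $\bm x=\rho_1-\rho_2$ forces $\rho_1=\rho_2$.

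For the norm identity, Cauchy--Schwarz gives $\abs{H(\bm x)}=\abs{\langle \bm x,\rho\rangle}\le\norm{\bm x}_{\mathcal X}\norm{\rho}_{\mathcal X}$, hence $\norm{H}\le\norm{\rho}_{\mathcal X}$; conversely, when $\rho\neq 0$, evaluating at $\bm x=\rho/\norm{\rho}_{\mathcal X}$ gives $H(\bm x)=\norm{\rho}_{\mathcal X}$, so $\norm{H}\ge\norm{\rho}_{\mathcal X}$ and equality follows (the case $\rho=0$ being trivial). I do not anticipate a genuine obstacle here — the content is entirely standard functional analysis — and the only thing that needs care is invoking completeness of $\ell^2(\mathbb Z,\mathbb R^d)$ before appealing to the projection theorem, which is exactly what legitimises the decomposition $\mathcal X = N\oplus N^\perp$. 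Finally, this is the engine behind Lemma \ref{lmm:representation}: applying it to each continuous linear functional $H_t$ of a target $\bm H\in\cs$ produces a representing sequence, after which causality annihilates the components at future indices and time-homogeneity removes the dependence on $t$, collapsing the two-sided sum to the one-sided convolutional form $H_t(\bm x)=\sum_{s\ge 0}\rhn(s)^\top x(t-s)$.
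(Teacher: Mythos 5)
Your proof is correct and is precisely the classical kernel--projection argument that the paper delegates to the cited reference (\citet{Bramwell1979}, Theorem 3.8-1), so there is no substantive difference in approach. The one step genuinely specific to this paper's setting --- verifying that $\mathcal X = \ell^2(\mathbb Z,\mathbb R^d)$ is complete so that the projection theorem applies --- is handled correctly in your proposal.
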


Based on this, we prove Lemma \ref{lmm:representation}.

\begin{proof}[Proof of Lemma \ref{lmm:representation}]
    By Riesz Representation Theorem, for any $t \in \mathbb Z$ and $H_t \in \bm H$, there exists a unique $\bm \rho_t \in \mathcal X$ such that
    \begin{equation}
        H_t(\bm x) = \sum_{s=-\infty}^{\infty}\rho_t(s)^\top x(s).
    \end{equation}
    With the fact that $\bm H$ is causal, we have
    \begin{equation}
        H_t(\bm x) = \sum_{s=-\infty}^{t}\rho_t(s)^\top x(s).
    \end{equation}
    By the time homogeneity $H_t(\bm x) = H_{t+\tau}(\bm x\depen{\tau})$ with $\tau=-t$, we get
    \begin{equation}
         H_t(\bm x) = \sum_{s=-\infty}^{t}\rho_t(s)^\top x(s) =
        \sum_{s=-\infty}^{0}\rho_{0}(s)^\top x(s+t).
    \end{equation}
    The conclusion follows by taking $\rho\depen{\bm H}(s) = \rho_0(-s)$.
\end{proof}

Recall the example in section 4.2, where we showed that
for any matrix $\bm A$ with the rank no more than $2$, there exists
$\rhh \in \hcnn^{(2,2,\{M_k\})}$ such that $T_{2^2}(\rhh)=\bm A$.
Now we extend this to the general $l$ and $K$.

\begin{propositionAppendix}\label{prop: outer product form}
For any $\bm w_1,\bm w_2,\dots,\bm w_K \in \mathbb R^l$, we have
    \begin{equation}\label{eq: conv is outerproduct }
        T_{l^K}\Big(\bm w_K \Conv_{l^{K-1}}\bm w_{K-1}\Conv_{l^{K-2}}\cdots
    \Conv_{l} \bm w_1\Big ) = 
    \bm w_K \otimes \bm w_{K-1} \otimes \cdots \otimes \bm w_1.
    \end{equation}
\begin{proof} 
    We prove this by induction.
    When $K=2$, it is the usual outer product for vectors.
    Suppose the conclusion holds for $K$, we prove for $K+1$. 
    Let $\bm f_K := \bm w_K \Conv_{l^{K-1}}\bm w_{K-1}\Conv_{l^{K-2}}\cdots
    \Conv_{l} \bm w_1  = (c_1,c_2,\dots,c_{l^K})$ 
    and 
    $\bm w_{K+1} = (w_{1},w_{2},\dots, w_{l})$,
    then 
    \begin{align}
        (\bm w_{K+1} \Conv_{l^{K}} \bm f_K)(t) &= \sum_{s\in\mathbb N} w_{K+1}(s)f_K(t-l^K s) \nonumber \\
        & = w(0)f_K(t) + w(1)f_K(t-l^K)  +\cdots +  w(l-1)f_K(t-(l-1) l^K)
        \nonumber \\
        & = w_1f_K(t) + w_2f_K(t-l^K)  +\cdots +  w_lf_K(t-(l-1) l^K).
    \end{align}
    The right hand side is non-zero only when $t = 0, 1, 2, \dots, l^{K+1}-1$, which gives
    \begin{equation}\label{eq: dilated conv}
        (\bm w_{K+1} \Conv_{l^{K}} \bm f_K) = 
        (w_1c_1,w_1c_2,\dots, w_1c_{l^K}, w_2c_1, w_2c_2, \dots, w_2c_{l^K}, \dots,w_lc_1,w_lc_2 \dots,w_lc_{l^K})
        \in \mathbb R^{l^{K+1}}.
    \end{equation}
    By the tensorisation for vectors discussed above, $\bm w_{K+1} \Conv_{l^{K}} \bm f_K$ can be rearranged into a matrix according to  row major ordering:
    \begin{equation}
      \begin{pmatrix}
          w_1c_1 & w_1c_2 & \cdots & w_1c_{l^K}  \\
          w_2c_1 & w_2c_2 & \cdots & w_2 c_{l^K} \\
          \vdots & \vdots & \ddots & \vdots \\
          w_lc_1 & w_lc_2 & \cdots & w_lc_{l^K} 
      \end{pmatrix}\in\mathbb R^{l\times l^K}.
    \end{equation}
  This is in fact the mode-K flattening of $ \bm w_{K+1} \otimes T_{l^K}(\bm f_K) $.
  By the induction hypothesis, we conclude that
  \begin{equation}
      T_{l^{K+1}}\Big(\bm w_{K+1} \Conv_{l^{K}} \bm f_K\Big) =  
      \bm w_{K+1} \otimes T_{l^K}(\bm f_K) =  \bm w_{K+1} \otimes \bm w_K \otimes \bm w_{K-1} \otimes \cdots \otimes \bm w_1.
  \end{equation}
    \end{proof}
\end{propositionAppendix}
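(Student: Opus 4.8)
The plan is to induct on the number of layers $K$, exploiting that the $K$-fold dilated convolution is obtained from the $(K-1)$-fold one by composing with a single extra convolution, and that the dilation rate $l^K$ is chosen precisely so that the two supports involved do not overlap.

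\emph{Base case ($K=2$).} I would compute $(\bm w_2 \Conv_{l} \bm w_1)(t) = \sum_{s} w_2(s)\, w_1(t-ls)$ directly. Since $\bm w_1,\bm w_2 \in \mathbb R^l$ are supported on $\{0,\dots,l-1\}$, writing $t = li+j$ with $i,j\in\{0,\dots,l-1\}$ forces $s=i$ in the sum, so the $t$-th entry equals $w_2(i)\, w_1(j)$. Rearranging this length-$l^2$ vector into an $l\times l$ matrix by row-major ordering yields the matrix with $(i,j)$-entry $w_2(i)w_1(j)$, which is exactly the outer product $\bm w_2 \otimes \bm w_1$.

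\emph{Inductive step.} Assume the identity for $K$ layers and set $\bm f_K := \bm w_K \Conv_{l^{K-1}} \cdots \Conv_{l} \bm w_1$. A direct support computation shows $\bm f_K$ is supported on $\{0,\dots,l^K-1\}$; since the next dilation rate $l^K$ equals this support length, the nonzero terms of $(\bm w_{K+1}\Conv_{l^K}\bm f_K)(t) = \sum_s w_{K+1}(s)\, f_K(t-l^K s)$ occupy disjoint blocks, so
\[
  \bm w_{K+1}\Conv_{l^K}\bm f_K = \bigl(\,w_{K+1}(0)\,\bm f_K,\ w_{K+1}(1)\,\bm f_K,\ \dots,\ w_{K+1}(l-1)\,\bm f_K\,\bigr)\in \mathbb R^{l^{K+1}}.
\]
Applying the tensorisation $T_{l^{K+1}}$, the row-major reshaping of this vector into an $l\times l^K$ matrix has $i$-th row $w_{K+1}(i)\,\bm f_K$; by the induction hypothesis $\bm f_K$ tensorises to $\bm w_K\otimes\cdots\otimes\bm w_1$, so this matrix is, by the definitions of outer product and mode flattening, a reshaping of $\bm w_{K+1}\otimes\bigl(\bm w_K\otimes\cdots\otimes\bm w_1\bigr)$. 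Since tensorisation is the inverse of that reshaping, $T_{l^{K+1}}(\bm w_{K+1}\Conv_{l^K}\bm f_K) = \bm w_{K+1}\otimes\bm w_K\otimes\cdots\otimes\bm w_1$, which closes the induction.

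\emph{Anticipated obstacle.} The delicate point is the last step: one must check that the block structure produced by the dilated convolution (the successive scaled copies of $\bm f_K$) aligns, block by block, with the row-major tensorisation, and that appending a new mode via the outer product corresponds precisely to stacking these scaled copies along that new mode. This is purely a matter of keeping the row-major versus column-major conventions and the mode ordering consistent; it requires no new ideas, but it is where an indexing slip would be easiest to make.
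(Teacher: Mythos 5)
Your proposal is correct and follows essentially the same route as the paper: induction on $K$, with the inductive step showing that the dilation rate $l^K$ matching the support length of $\bm f_K$ produces the non-overlapping block vector $(w_{K+1}(0)\bm f_K,\dots,w_{K+1}(l-1)\bm f_K)$, whose row-major reshaping is the mode-$(K+1)$ flattening of $\bm w_{K+1}\otimes T_{l^K}(\bm f_K)$. The indexing bookkeeping you flag as the delicate point is exactly the content of the paper's final step, and your handling of it matches.
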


\begin{propositionAppendix}\label{prop: Cnn tensor form}
Let $\tensor A \in \mathbb R^{l \times l \times \cdots \times l}$ be an order $K$ tensor with the $k$-rank $r_k$, $k=1,\cdots,K$.
There exists $\rhh \in \hcnn\depen{l}$
such that $T_{l^K}(\rhh)=\tensor A$.
\begin{proof}
  Recall the linear CNN model
  \begin{equation}
		\begin{aligned}
		\bm h_{0,i} &= \bm x_i,\\
		\bm h_{k+1,i} &= \sum_{j=1}^{M_k} {\bm w}_{kji} \Conv_{l^k} \bm h_{k,j} ,\\
		\bm {\hat y} &= \bm h_{K+1}.
		\end{aligned}
\end{equation}
By linearity, there exists $\rhh \in \hcnn^{(l,K,\set{M_k})}$ such that
\begin{equation}
    \rhh=
    \sum_{i_{K}=1}^{M_K}\sum_{i_{K-1}=1}^{M_{K-1}}\cdots\sum_{i_1=1}^{M_1} 
    \bm w_{K, i_{K}} \Conv_{l^{K}} \bm w_{K-1, i_{K-1}}\Conv_{l^{K-1}} \cdots
    \Conv_{l}\bm w_{1, i_1},
\end{equation}
where $\bm w_{k,i_k} \in \set{\bm w_{kij}}$ is a filter at layer $k$.
According to Proposition \ref{prop: outer product form}, we have
\begin{equation}
    T_{l^K}(\rhh)=
   \sum_{i_{K}=1}^{M_K}\sum_{i_{K-1}=1}^{M_{K-1}}\cdots\sum_{i_1=1}^{M_1}
   \bm w_{K,i_{K}} \otimes \bm w_{K-1, i_{K-1}}  \otimes \cdots \otimes \bm w_{1,i_1}.
\end{equation}

The conclusion now follows from Remark \ref{rmk: hosvd form} for sufficient large $M_k$.
\end{proof}
\end{propositionAppendix}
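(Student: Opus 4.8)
The plan is to realize $\tensor A$ as the impulse response of a linear CNN by combining the outer-product identity of Proposition \ref{prop: outer product form} with the HOSVD expansion of Remark \ref{rmk: hosvd form}. First I would unfold the linear CNN model into its convolutional representation: by linearity, for any choice of channel counts $\set{M_k}$ and filters, the output representation $\rhh$ is a sum over all channel paths through the network,
\begin{equation}
    \rhh = \sum_{i_K=1}^{M_K}\cdots\sum_{i_1=1}^{M_1}
    \bm w_{K,i_K} \Conv_{l^{K}} \bm w_{K-1,i_{K-1}} \Conv_{l^{K-1}} \cdots \Conv_{l} \bm w_{1,i_1},
\end{equation}
where the filter appearing at layer $k$ along a given path is the one routing the relevant incoming channel to the outgoing channel. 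Applying the tensorisation operator $T_{l^K}$, which is linear, and invoking Proposition \ref{prop: outer product form} term by term converts each nested dilated convolution into a pure outer product, giving
\begin{equation}
    \ten{\rhh} = \sum_{i_K=1}^{M_K}\cdots\sum_{i_1=1}^{M_1}
    \bm w_{K,i_K} \otimes \bm w_{K-1,i_{K-1}} \otimes \cdots \otimes \bm w_{1,i_1}.
\end{equation}
Thus the problem reduces to showing that the right-hand side can be made equal to $\tensor A$ by a suitable choice of filters and channel counts.

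Next I would appeal to the HOSVD form in Remark \ref{rmk: hosvd form}, which expresses $\tensor A = \sum_{i_1=1}^{r_1}\cdots\sum_{i_K=1}^{r_K} s_{i_1\cdots i_K}\, \bm u^{(1)}_{i_1}\otimes\cdots\otimes\bm u^{(K)}_{i_K}$; that is, $\tensor A$ is a sum of at most $N := r_1 r_2 \cdots r_K$ rank-one tensors once the scalar core entries are absorbed into one factor of each term, for instance by rescaling the mode-$K$ factor by $s_{i_1\cdots i_K}$. Re-indexing the resulting nonzero rank-one terms by a single path index $p=1,\dots,N$ as $\bm b^{(K)}_p \otimes \cdots \otimes \bm b^{(1)}_p$, the goal becomes to assign each term to its own channel path. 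I would take each $M_k$ large enough to accommodate $N$ independent paths and choose the routing filters so that the $p$-th path carries the chain $\bm b^{(1)}_p,\dots,\bm b^{(K)}_p$ while all cross-path filters vanish; the distributivity of the outer product over the path sum then matches the CNN expansion term by term with the HOSVD expansion, yielding $\ten{\rhh} = \tensor A$ and hence $\rhh \in \hcnn\depen{l}$.

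The main obstacle is the bookkeeping of the channel-routing structure: the filter at layer $k$ in the genuine CNN depends on both the incoming and outgoing channel indices, so the Cartesian path sum is not automatically a free sum of arbitrary rank-one tensors. One must verify that, with sufficiently many channels and the diagonal (non-interacting) routing described above, each HOSVD rank-one term is realized by exactly one path without spurious contributions from other paths. This amounts to checking that zeroing the off-path filters isolates the $N$ chains so that the path sum collapses precisely to the $N$-term rank-one decomposition of $\tensor A$; the remaining verification that each isolated chain tensorizes to the correct outer product is exactly the content of Proposition \ref{prop: outer product form}.
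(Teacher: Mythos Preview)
Your proposal is correct and follows essentially the same approach as the paper: unfold the linear CNN into a path sum of nested dilated convolutions, apply Proposition~\ref{prop: outer product form} term by term to obtain a sum of outer products, and then match this against the HOSVD expansion of $\tensor A$ from Remark~\ref{rmk: hosvd form} by choosing sufficiently many channels. Your treatment is in fact more careful than the paper's, which writes each filter as $\bm w_{k,i_k}$ depending on a single channel index and thereby glosses over precisely the bookkeeping issue you flag; your explicit diagonal-routing construction (one channel path per rank-one HOSVD term, with off-path filters zeroed) is the clean way to resolve it.
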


Now we can prove the first main theorem in the main text.

	\begin{proof}[Proof of Theorem \ref{thm:UAPCNN}]
	By Lemma \ref{lmm:representation} and (12) in the main text, we have
    \begin{align}\label{eq:errUb}
        \norm{\bm H - \bhh}^2 = \sup_{t \in \mathbb Z} \norm{H_t - \widehat H_t}^2
        &=
         \sup_{t \in \mathbb Z}\sup_{\norm{\bm x}_{\mathcal X}\leq 1} 
         \left\lvert{\sum_{s\in\mathbb N}\big(\rh(s)-\rhh(s)\big)^\top x(t-s)}\right\lvert^2 \nonumber \\
         &\leq \sum_{s\in\mathbb N}\big \lvert \rh(s)-\rhh(s) \big\lvert^2 \nonumber \\
          & = \sum_{s=0}^{l^K-1}\Big\rvert \rh(s) - \rhh(s)\Big\rvert^2
    + \sum_{s=l^K}^{\infty} \Big|  \rh(s) \Big|^2 \nonumber \\
    & = \left\|T_{l^K}(\rhh) - T_{l^K}(\rh)\right\|^2 
    + \sum_{s=l^K}^{\infty} \Big|  \rh(s) \Big|^2.
    \end{align}
    Since $\rh \in \ell^2$,
	we can choose $K$ appropriately large such that the second term is less than $\epsilon$. According to Proposition \ref{prop: Cnn tensor form}, there exists $\hcnn\depen{l}$ such that the first term is zero. The proof is completed.
	\end{proof}
Lemma \ref{lmm:lowrankapp_tensor} follows from Proposition \ref{prop: hosvd error} and the definition of rank for tensors.

This was denoted by $\|\cdot \|_{l,g}$ in the main text. Here we amend the notation as we do not discuss the norm aspects.

\begin{propositionAppendix}\label{prop: addition zeros}
Let $\bm \rho$ be a finitely supported sequence
such that $r(\bm \rho) \leq l^K-1 $.
Denote the singular values of $\ten{\bm \rho}$ by $\sigma_1\ge\sigma_2\ge\cdots\ge\sigma_{lK}$.
For any sequence $\bm{\hat \rho} \in \mathbb R^{l^{K+1}}$ with $\bm{\hat \rho}_{[0,l^K-1]} = \bm\rho$ and $\bm{\hat \rho}_{[l^K,l^{K+1}]} = 0$, the singular values of $T_{l^{K+1}}(\hat{\bm \rho})$ are
\begin{equation}
   \left\|\ten{\bm\rho}\right\| = \sigma_{l(K+1)}  \geq\sigma_1\ge\sigma_2\ge\cdots\ge\sigma_{lK}\geq 0 = 
   \sigma_{lK+1} =\sigma_{lK+1} = \cdots = \sigma_{lK+l-1}.
\end{equation}
\begin{proof}
The singular values of $T_{l^{K+1}}(\hat{\bm \rho})$ arising from mode-$1$ to mode-$K$ flattening are not changed,
since there are only additions of zero columns. 
Now we consider the mode-$(K+1)$ flattening.
We add zeros to an additional dimension of the tensor
such that $\hat a_{i_1i_2\dots i_K 1} = a_{i_1i_2\dots i_K}$
and $\hat a_{i_1i_2\dots i_K j} = 0 $ for $2\leq j\leq l$.
Then the columns of $T_{l^{K+1}}(\hat{\bm \rho})_{(K+1)}$ are the vectors along the new dimension
$(\hat a_{i_1i_2\dots i_K 1}, 0,0,\dots,0) = (a_{i_1i_2\dots i_K}, 0,0,\dots,0)$, which gives a unique non-zero singular value 
$ \sigma_{l(K+1)}=\norm{\ten{\bm\rho}}$, and $(l-1)$ zero singular values.
\end{proof}
\end{propositionAppendix}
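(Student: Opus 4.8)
The plan is to first pin down the exact combinatorial relationship between $T_{l^{K+1}}(\hat{\bm\rho})$ and $\ten{\bm\rho}$, and then read off the singular values mode by mode. Write $\tensor A := \ten{\bm\rho}$ and $\tensor B := T_{l^{K+1}}(\hat{\bm\rho})$, both with all dimensions equal to $l$ but of orders $K$ and $K+1$ respectively. Unwinding the reshaping conventions from the appendix --- the length-$l^{K+1}$ vector $\hat{\bm\rho}$ is first rearranged into an $l\times l^K$ matrix by row-major ordering, and $\tensor B$ is the tensorisation of that matrix (so its mode-$(K+1)$ flattening is that matrix) --- one checks that, because $\hat{\bm\rho}$ agrees with $\bm\rho$ on $[0,l^K-1]$ and vanishes afterwards, only the first of the $l$ rows is nonzero, and that row is exactly the vectorisation of $\bm\rho$. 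Consequently the entries satisfy $b_{i_1\cdots i_K i_{K+1}} = a_{i_1\cdots i_K}$ when $i_{K+1}=1$ and $b_{i_1\cdots i_K i_{K+1}} = 0$ when $i_{K+1}\ge 2$; equivalently $\tensor B = \tensor A \otimes \bm e_1$, with $\bm e_1$ the first standard basis vector of $\mathbb R^l$ placed in the new mode. This identification is the genuinely fiddly step, and it is where the row-major/column-major bookkeeping must be done carefully; everything afterwards is elementary.

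Given this, the singular values split into two groups. For $k=1,\dots,K$, the mode-$k$ flattening $\bm B_{(k)}$ has the same rows as $\bm A_{(k)}$, and its columns consist of the columns of $\bm A_{(k)}$ (those indexed by $i_{K+1}=1$) together with $(l-1)l^{K-1}$ identically-zero columns (those indexed by $i_{K+1}\ge 2$). Hence, up to a column permutation, $\bm B_{(k)} = [\,\bm A_{(k)}\mid \bm 0\,]$; since column permutations and the adjunction of zero columns leave the singular spectrum unchanged, the $l$ singular values coming from mode $k$ of $\tensor B$ coincide with those of mode $k$ of $\tensor A$. Taking the union over $k=1,\dots,K$ recovers exactly $\sigma_1\ge\sigma_2\ge\cdots\ge\sigma_{lK}\ge 0$.

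For the new mode $k=K+1$, the flattening $\bm B_{(K+1)}\in\mathbb R^{l\times l^K}$ has, by the entry formula above, its first row equal to the vectorisation of $\tensor A$ (up to a permutation of entries) and all remaining rows zero, so it is a rank-one matrix whose unique nonzero singular value equals the Frobenius norm of that row, namely $\|\bm\rho\|_2 = \|\tensor A\| = \|\ten{\bm\rho}\|$, with the remaining $l-1$ singular values from this mode equal to $0$. To obtain the asserted ordering, note $\|\tensor A\| = \|\bm A_{(1)}\|_F = (\sum_i (\sigma_i^{(1)})^2)^{1/2} \ge \sigma_1$, so this one new nonzero value dominates all of $\sigma_1,\dots,\sigma_{lK}$, while the $l-1$ new zero values sit at the bottom. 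Collecting the three groups yields precisely the chain in the proposition. The only real obstacle is the first step --- the explicit entrywise description of $\tensor B$ from the nested reshaping definitions; once that is in hand, the proof reduces to the standard facts that zero columns/rows and row/column permutations do not affect singular values, together with the rank-one observation for the new mode.
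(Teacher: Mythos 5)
Your proposal is correct and follows essentially the same route as the paper's proof: identify the entrywise relation $\hat a_{i_1\cdots i_K 1}=a_{i_1\cdots i_K}$ (zero otherwise), observe that modes $1$ through $K$ only acquire zero columns, and that the mode-$(K+1)$ flattening is rank one with its sole nonzero singular value equal to $\norm{\ten{\bm\rho}}$. Your added justification that $\norm{\ten{\bm\rho}} = \lVert\bm A_{(1)}\rVert_F \ge \sigma_1$, which places the new singular value at the top of the ordering, is a detail the paper asserts but does not spell out.
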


\begin{propositionAppendix}\label{prop: finite support in g}
Suppose the function $g$ is monotonously decreasing and strictly positive.
Then for any $\bm H\in\cs$ such that $\rh$ is finitely supported, 
we have $\bm H \in \cs\depen{l,g}$.
\begin{proof}
Let $K'=\inf\{K\in\mathbb N_+: l^{K}\ge r(\rh)\}$. Then for any $s \ge  lK'$,
$\sum_{i=s+K'+k}^{l(K'+k)}\abs{\sigma_i\depen{K'+k}}^2 = 0$ for all $k\in\mathbb N_+$ according to Proposition \ref{prop: addition zeros}, which completes the proof.
\end{proof}
\end{propositionAppendix}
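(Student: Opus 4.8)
To prove $\bm H \in \cs\depen{l,g}$ I must show the complexity measure $C\depen{l,g}(\bm H)$ is finite. Writing $F(s,K) := \big(\sum_{i=s+K}^{lK}\abs{\sigma_i\depen{K}}^2\big)^{1/2}$ for the tail of the spectrum of $\ten{\rh}$, and using that $g$ is strictly positive, the defining infimum in \eqref{eq: Hnorm} evaluates to $C\depen{l,g}(\bm H) = \sup_{s\ge 0,\,K\ge 1} F(s,K)/g(s)$; thus it suffices to bound this supremum. I would fix $K' = \inf\{K\in\mathbb N_+ : l^K \ge r(\rh)\}$, the least depth whose receptive field $[0,l^K-1]$ already contains the support of $\rh$, so that for every $K \ge K'$ the truncation $\rh_{[0,l^K-1]}$ equals $\rh$ on its support followed by zeros.

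The core of the argument is a uniform-in-$K$ control of $F(s,K)$ obtained from Proposition \ref{prop: addition zeros}. That result describes exactly how the sorted singular values of $\ten{\rh}$ change when the depth is raised from $K$ to $K+1$ with $K \ge K'$: a single new value equal to $\norm{\rh}_2$ is created, and since this equals the tensor norm it is the largest singular value and enters at the top of the list; simultaneously $(l-1)$ zeros are appended at the bottom; all previously existing values are retained. I claim this yields the invariance $F(s,K+1) = F(s,K)$ for every $s \ge 0$ and every $K \ge K'$. Indeed, as a sorted multiset the depth-$(K+1)$ spectrum is the depth-$K$ spectrum with one copy of $\norm{\rh}_2$ prepended at the maximum and $(l-1)$ zeros appended; the tail beginning at index $s+(K+1)$ therefore consists of the depth-$K$ tail beginning at index $s+K$ together with the newly appended zeros, because the unit increase of the lower summation index exactly offsets the insertion of the top value (which, since $s+K \ge 1$, is never summed). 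The appended zeros contribute nothing, so the two tail sums coincide. Iterating gives $F(s,K) = F(s,K')$ for all $K \ge K'$.

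With the invariance in hand the conclusion is immediate. For $s \ge lK'$ one has $s + K' > lK'$, so $F(s,K')$ is an empty sum and equals $0$; by the invariance $F(s,K)=0$ for all $K \ge K'$, and for the finitely many $K < K'$ the sum is likewise empty once $s \ge lK'$; hence $F(s,K)/g(s) = 0$ there. For each of the finitely many remaining indices $s \in \{0,1,\dots,lK'-1\}$, the invariance collapses $\sup_{K\ge 1}F(s,K)$ to a maximum over the finite set $K\in\{1,\dots,K'\}$, a finite number $B_s$, and since $g(s) > 0$ the ratio $B_s/g(s)$ is finite. Taking the maximum over these finitely many $s$ gives $C\depen{l,g}(\bm H) = \max_{0\le s < lK'} B_s/g(s) < \infty$, as required.

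I expect the invariance of the second paragraph to be the only real obstacle. The temptation is to bound $F(s,K)$ by $\norm{\rh}_2$, but this fails because the total spectral energy $\sum_{i=1}^{lK}\abs{\sigma_i\depen{K}}^2 = K\norm{\rh}_2^2$ grows linearly in $K$. The point supplied by Proposition \ref{prop: addition zeros} is that all of this growth is carried by the repeated copies of $\norm{\rh}_2$ sitting at the very top of the spectrum, which the tail sum starting at index $s+K$ always skips; what survives past that index is frozen at its depth-$K'$ value. The one step needing care is checking that the unit shift of the summation index matches the single top-insertion exactly, so that no surviving singular value is dropped or double-counted.
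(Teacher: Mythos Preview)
Your proof is correct and follows the same strategy as the paper's: fix $K'=\inf\{K:l^K\ge r(\rh)\}$ and invoke Proposition~\ref{prop: addition zeros} to control the tails uniformly in $K$. Your version is in fact more complete, since you make the invariance $F(s,K+1)=F(s,K)$ for $K\ge K'$ explicit and then carefully dispose of the finitely many pairs $(s,K)$ with $s<lK'$, whereas the paper's one-line proof handles only $s\ge lK'$ and $K>K'$ and leaves the remaining (easy) cases to the reader.
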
 
\begin{propositionAppendix}
Suppose $\bm H\in\cs$ with $\rh$ finitely supported.
Then there exists a finitely supported decreasing $g$ 
such that $\bm H \in \cs\depen{l,g}$.
\begin{proof}
The proof is a straightforward application of Proposition \ref{prop: addition zeros}.
\end{proof}
\end{propositionAppendix}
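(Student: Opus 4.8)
The plan is to construct $g$ explicitly by showing that, once the depth $K$ is large enough to contain the support of $\rh$, the tail sums of singular values entering $C\depen{l,g}(\bm H)$ become independent of $K$ and are supported on a fixed finite range of $s$; one can then dominate this profile by a suitable indicator. First I would set $K_0 := \min\set{K \in \mathbb N_+ : l^K - 1 \ge \rad{\rh}}$, so that for every $K \ge K_0$ the finite vector $\rh_{[0,l^K-1]}$ is exactly $\rh$ padded on the right by zeros, and hence $\ten{\rh}$ is obtained from $T_{l^{K_0}}(\rh)$ by iteratively adjoining zero-padded modes $K - K_0$ times. Applying Proposition \ref{prop: addition zeros} once per increment, the multiset of singular values of $\ten{\rh}$ for $K \ge K_0$ consists of the singular values $\tau_1 \ge \cdots \ge \tau_{lK_0}$ of $T_{l^{K_0}}(\rh)$, together with $K - K_0$ extra copies of $\norm{\rh}_2 = \norm{T_{l^{K_0}}(\rh)}$ and $(l-1)(K-K_0)$ zeros.

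The crucial structural point, already built into Proposition \ref{prop: addition zeros}, is that each newly created singular value dominates all previously present ones, so when the singular values are listed in non-increasing order $\sigma_1\depen{K} \ge \cdots \ge \sigma_{lK}\depen{K}$, the $K-K_0$ copies of $\norm{\rh}_2$ occupy positions $1,\dots,K-K_0$, are followed by $\tau_1,\dots,\tau_{lK_0}$, and then by zeros. Since the sum defining $C\depen{l,g}(\bm H)$ begins at index $s+K$ and $s+K \ge K > K - K_0$, those extra copies never enter it; re-indexing gives, for every $s \ge 0$ and every $K \ge K_0$,
\begin{equation}\label{eq:svstab}
	\sum_{i=s+K}^{lK} \abs{\sigma_i\depen{K}}^2 = \sum_{j=s+K_0}^{lK_0} \tau_j^2 ,
\end{equation}
a quantity that does not depend on $K$, is non-increasing in $s$, and vanishes once $s > (l-1)K_0$. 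For the finitely many shallow depths $1 \le K < K_0$, the left-hand side of \eqref{eq:svstab} is in any case a finite, non-increasing function of $s$ vanishing for $s > (l-1)K$.

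Finally I would put $q(s) := \sup_{K \ge 1} \big( \sum_{i=s+K}^{lK} \abs{\sigma_i\depen{K}}^2 \big)^{1/2}$; by the two observations above this is the pointwise maximum of finitely many non-increasing functions with finite support, hence finite-valued, non-increasing, and supported in $\set{0,1,\dots,S_0}$ where $S_0 := (l-1)K_0$. Then $g := \mathbf 1_{\{0,\dots,S_0\}}$ (or $g(s) := (S_0 + 1 - s)_+$ if one prefers a strictly decreasing profile) is a finitely supported, non-increasing, nonnegative element of $\bm c_0(\mathbb N, \mathbb R_+)$, and it satisfies $\big( \sum_{i=s+K}^{lK} \abs{\sigma_i\depen{K}}^2 \big)^{1/2} \le q(s) \le \big( \max_{0 \le s \le S_0} q(s) \big)\, g(s)$ for all $s \ge 0$ and $K \ge 1$. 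Hence $C\depen{l,g}(\bm H) \le \max_{0 \le s \le S_0} q(s) < \infty$, i.e. $\bm H \in \cs\depen{l,g}$. I expect the only genuine effort to lie in the bookkeeping of the first two paragraphs — confirming that iterating Proposition \ref{prop: addition zeros} yields precisely $K-K_0$ copies of $\norm{\rh}_2$ and that they all sit strictly above index $s+K$ — after which \eqref{eq:svstab} makes the choice of $g$ essentially automatic.
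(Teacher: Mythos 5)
Your proof is correct and is exactly the argument the paper has in mind: the paper's own proof is the one-line remark that the result is "a straightforward application of Proposition \ref{prop: addition zeros}", and your write-up simply supplies the bookkeeping — stabilisation of the tail sums for $K\ge K_0$ via iterated zero-padding, finiteness of the remaining shallow depths, and the choice of an indicator-type $g$. The re-indexing identity and the observation that the $K-K_0$ added copies of $\norm{\rh}_2$ sit strictly above index $s+K$ both check out, so this is a faithful, fully detailed version of the paper's intended proof.
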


Next we prove the main theorem on error bound.
\begin{proof}[Proof of theorem \ref{thm: error bound}] (i) Lower bound. Since
    \begin{align}
        \norm{\bm H - \bhh}
        &=
         \sup_{t \in \mathbb Z}\sup_{\norm{\bm x}_{\mathcal X}\leq 1} 
         \left\lvert{\sum_{s\in\mathbb N}\big(\rh(s)-\rhh(s)\big)^\top x(t-s)}\right\lvert \nonumber,\\
         \intertext{by taking a specific $\bm x$ with $\displaystyle x_i(0) = \frac 1 {\sqrt d} \sgn (\rh_i(t)-\rhh_i(t))$ and $x_i(s) = 0$ otherwise, we have}
         & \ge \frac 1 {\sqrt d}\sup_{t \in \mathbb N}
         {\left\lvert\rh(t)-\rhh(t)\right\lvert}  \nonumber\\
         &\geq \frac 1 {\sqrt d} \sup_{t \in [l^K,\infty]} 
         {
         \left\lvert\rh(t)\right\lvert}
         \nonumber \\
         & \ge \frac 1 {\sqrt d} \sup_{t \in [l^K,\infty]} \norm{\rh(t)}_2,
    \end{align}
    where the inequality holds by taking a specific $\bm x$ with $x_i(0) =1$  and $x_i(s) = 0$ otherwise.
    

(ii) Upper bound. Following the proof of Theorem \ref{thm:UAPCNN}, or \eqref{eq:errUb} gives
\begin{align}
        \norm{\bm H - \bhh} 
    & \leq \sum_{i=1}^d \left\|T_{l^K}(\rhh_i) - T_{l^K}(\rh_i)\right\| + \norm{\rh_{[l^K,\infty]}}_2.
    \end{align}             
The remaining task is to bound the first term.
Based on Lemma \ref{lmm:lowrankapp_tensor}, 
we only need to calculate the maximum possible rank of $T_{l^K}(\rhh_i)$.
Let $r_k$  denote  the $k$-rank of $T_{l^K}(\rhh_i)$.
From Remark \ref{rmk: hosvd form}, by absorbing the scalar $s_{i_1i_2\dots i_K}$ into any of the vector $\bm u_{i_k}^{(k)}$,
we have the following relationship:
\begin{align}
    d\prod_{k=1}^K r_k + lK\geq d\prod_{k=1}^K r_k + \sum_{k=1}^K r_k \geq \sum_{k=2}^K M_k M_{k-1},
\end{align}
thus we have
$\prod_{k=1}^K r_k  \geq  \frac 1 d( \sum_{k=2}^K M_k M_{k-1} -lK)=M$,
which implies $\rank T_{l^K}(\rhh_i) = \sum_{k=1}^{K}r_k \geq KM^{\frac 1 K}$.
Combined with Definition \ref{def:Hnorm} gives the conclusion.
\end{proof}

Next we look in details the two examples about comparison between RNNs and CNNs in details.

\paragraph{Example where RNNs out-perform CNNs.}
We take a scalar input with $d=1$.
Consider a target $\bm H\in \cs$ 
with the representation $\rh(t) = \gamma^{t}$, where $ 0<\gamma<1.$
It is easy for RNNs to approximate this target, since the representation has a power form. In fact, we have $\bm H \in \hrnn^{(1)}$, i.e. a RNN with one hidden unit is sufficient to achieve an exact representation
for any $\gamma\in(0,1)$.

For any CNN model $\bhh \in  \hcnn\depen{l,K,\set{M_k}}$, based on the lower bound of Theorem \ref{thm: error bound},  we have that
\begin{equation}
    	\norm{\bm H - \bhh}^2 \geq  \frac 1 {\sqrt d} \sup_{t \in [l^K,\infty]} \norm{\rh(t)}_2.
\end{equation}

Thus, in order to achieve an approximation error with $\norm{\bm H - \bhh} < \epsilon$, we have
\begin{equation}
     \sup_{t \in [l^K,\infty]} \norm{\rh(t)}_2  = \gamma^{l^K}
	< \epsilon.
\end{equation}
This implies $l^K \geq \frac{\log ( \epsilon )}{\log (\gamma )}$.
That is, the number of layers necessary to 
achieve an approximation error smaller than $\epsilon$ 
diverges to infinity as $\gamma$ approaches 1.

\paragraph{Example where CNNs out-perform RNNs.}
We still take a scalar input with $d=1$.
Consider a target $\bm H \in \cs$ with the  representation
\begin{equation}
    \rho^{(\bm H)}(t) = 
    \begin{cases}
        1, & t = 2^K \\
        0, & \text{otherwise}
    \end{cases}, \quad  K \in \mathbb N_+.
\end{equation}
We have $\bm H \in \hcnn^{(2,K,\set{1})}$.
That is, a $K$-layer CNN with one channel per layer
is sufficient to achieve an exact representation. 

Recall that RNN approximates the target $\rh$ with a power sum $\rhh(s) = c^\top W^{s-1}U$.
Suppose here $W \in \mathbb R^{m \times m}$ is a diagonalisable matrix with negative eigenvalues.
It has some special structures which are summarised in the following theorem.
\begin{theoremAppendix} \label{thm: power sum} \cite{Borwein1996}
	Let $\displaystyle E_m :=\left \{u: u(t)=c_0 + \sum_{i=0}^{m} c_i\gamma_i^{ t} , \  c_i\in \mathbb R, \gamma_i > 0 \right\}$, then
	\begin{equation}
		\sup_{u\in E_k} \frac{\abs{u'(y)}}{\sup_{s\in [a,b]} u(s)}\leq \frac {2m-1}{\min\{y-a,b-y\}}, \ \ y \in (a,b).
	\end{equation}
\end{theoremAppendix}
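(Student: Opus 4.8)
The plan is to derive this from the Bernstein–Markov theory for exponential sums, which is the substance of \cite{Borwein1996}; what follows sketches the structure of such a proof together with the reductions I would make. First I would set $\gamma_i=e^{\lambda_i}$ with $\lambda_i\in\mathbb R$ (permissible since $\gamma_i>0$), so that every $u\in E_m$ belongs to the finite-dimensional space $V=\mathrm{span}\{1,e^{\lambda_0 t},\dots,e^{\lambda_m t}\}$; coincidences among the $\gamma_i$, or some $\gamma_i=1$ merging with the constant term, only lower $\dim V$ and make the claim easier, so I may assume the exponents are distinct. (I read $\sup_{[a,b]}u$ as the sup-norm $\|u\|_{[a,b]}:=\sup_{[a,b]}|u|$, as is standard for such inequalities.) Next I would normalize: since $\delta:=\min\{y-a,b-y\}$ satisfies $[y-\delta,y+\delta]\subseteq[a,b]$, it suffices to bound $|u'(y)|$ by $\tfrac{2m-1}{\delta}\|u\|_{[y-\delta,y+\delta]}$; and the substitution $t\mapsto y+\delta s$ sends $V$ to another space of exponential sums, maps $[y-\delta,y+\delta]$ to $[-1,1]$, and rescales the derivative by $\delta$, so the whole statement reduces to the single inequality $|v'(0)|\le(2m-1)\|v\|_{[-1,1]}$ for $v$ in such a space.

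The structural input is that $V$ is an extended Chebyshev system on $\mathbb R$: every nonzero $v\in V$ has at most $\dim V-1$ real zeros counted with multiplicity. I would prove this by induction on $\dim V$ — dividing by $e^{\lambda_0 t}$ and differentiating maps $V$ onto a span of one fewer exponential, and Rolle's theorem limits how many zeros are lost — exactly as in the classical treatment of Descartes systems. Granting this, the reduced inequality follows from the standard extremal argument: the problem $\sup\{v'(0):v\in V,\ \|v\|_{[-1,1]}\le1\}$ is solved by a ``Chebyshev-type'' element $T_V\in V$ that equioscillates between $\pm1$ on $[-1,1]$ as often as the dimension permits, and $|T_V'(0)|$ is then estimated by comparing $T_V$ with a suitable translate or with the corresponding extremal element of the reciprocal system $\{e^{-\lambda_i t}\}$ and invoking the zero count to forbid a steeper slope. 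The precise constant $2m-1$, and the fact that it is governed by $\min\{y-a,b-y\}$ rather than by $b-a$, come out of performing this extremal analysis on the two half-intervals $[a,y]$ and $[y,b]$ separately — the reciprocal system entering because exponential sums are not symmetric under $t\mapsto-t$.

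The main obstacle is obtaining the \emph{sharp} constant. A soft version of the argument above — using only the zero bound and an interpolation/divided-difference estimate — already gives $|u'(y)|\le C_m/\delta\,\|u\|_{[a,b]}$ with $C_m$ depending only on $m$; squeezing this down to the optimal $C_m=2m-1$ requires the full description of the extremal exponential sum (its existence, uniqueness and equioscillation, and a careful count of its zeros clustering near the endpoint closest to $y$), which is carried out in \cite{Borwein1996}. Accordingly I would present the reductions above in detail and cite that reference for the sharp extremal estimate rather than reconstruct it here.
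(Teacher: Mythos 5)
The paper offers no proof of this statement at all---it is imported verbatim from \cite{Borwein1996}---and your proposal likewise ends by deferring the sharp extremal estimate (existence, uniqueness and equioscillation of the Chebyshev-type exponential sum) to that same reference, so the two treatments coincide in substance. The reductions you do carry out yourself (reading the denominator as the sup-norm, restricting to $[y-\delta,\,y+\delta]$ with $\delta=\min\{y-a,\,b-y\}$, and rescaling to $[-1,1]$, which is legitimate because the class of exponential sums is closed under affine substitutions in $t$) are sound and consistent with how the inequality is actually established in that reference.
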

We rewrite this theorem into a discrete form.
\begin{corollaryAppendix}\label{coro: power sum}
Let $u \in E_m$. Then
	\begin{equation}
		\abs{u(t+1)-u(t)}\leq \frac{2m}{t}\sup_{s\geq 0} u(s).
	\end{equation}
	\begin{proof}
	By the mean value theorem, there exists $y \in [t,t+1]$ such that $\abs{u(t+1)-u(t)}  = \abs{u'(y)}$. 
	The corollary then follows from Theorem \ref{thm: power sum}.
	\end{proof}
\end{corollaryAppendix}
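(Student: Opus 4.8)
The plan is to reduce the discrete statement to the continuous Markov--Bernstein-type inequality for exponential sums recorded in Theorem~\ref{thm: power sum}. First I would note that every $u \in E_m$ extends to a $C^\infty$ function on all of $\mathbb{R}$ (each $\gamma_i^t = e^{t\ln\gamma_i}$ is smooth since $\gamma_i>0$, and a finite sum of such plus a constant is smooth), so the mean value theorem applies on $[t,t+1]$: there is $y \in (t,t+1)$ with $u(t+1)-u(t) = u'(y)$, hence $\abs{u(t+1)-u(t)} = \abs{u'(y)}$. This converts the finite difference into a pointwise derivative bound, which is exactly the quantity Theorem~\ref{thm: power sum} controls.

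Next I would apply Theorem~\ref{thm: power sum} at the point $y$ with an interval $[a,b] = [0,b]$ and then send $b \to \infty$. For fixed $y$ and all sufficiently large $b$ one has $y \le b/2$, so $\min\{y-a, b-y\} = y$; moreover $\sup_{s \in [0,b]} u(s)$ is non-decreasing in $b$ and converges to $\sup_{s \ge 0} u(s)$. Passing to the limit in the inequality $\abs{u'(y)} \le \frac{2m-1}{y}\,\sup_{s\in[0,b]}u(s)$ therefore yields $\abs{u'(y)} \le \frac{2m-1}{y}\,\sup_{s \ge 0} u(s)$.

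Finally the asserted bound follows from cheap estimates on the constants: since $y > t$ we have $1/y < 1/t$, and $2m-1 < 2m$, so combining with the previous step gives $\abs{u(t+1)-u(t)} = \abs{u'(y)} \le \frac{2m-1}{y}\sup_{s\ge0}u(s) \le \frac{2m}{t}\sup_{s\ge0}u(s)$, as claimed.

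The only point that needs care --- and the step I would expect to be the main obstacle --- is the limiting argument: one must verify that Theorem~\ref{thm: power sum} is legitimately applicable on every window $[0,b]$, with the constant $2m-1$ uniform in $b$, and that $b \mapsto \sup_{s\in[0,b]}u(s)$ behaves monotonically so that the limit can be taken. When $\sup_{s\ge 0}u(s) = +\infty$ (possible if some base $\gamma_i$ exceeds $1$) the asserted inequality is vacuous, so one may assume this supremum is finite, in which case the monotone convergence is immediate; the remaining estimates are elementary.
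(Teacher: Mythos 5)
Your proposal is correct and follows essentially the same route as the paper's own proof: apply the mean value theorem on $[t,t+1]$ to convert the finite difference into $\abs{u'(y)}$, then invoke Theorem~\ref{thm: power sum}. You merely spell out the details the paper leaves implicit (choosing $[a,b]=[0,b]$ with $b$ large so that $\min\{y-a,b-y\}=y$, bounding $\sup_{[0,b]}u \le \sup_{s\ge 0}u$, and using $1/y<1/t$, $2m-1<2m$), which is a faithful elaboration rather than a different argument.
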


For a fixed $m$,
the changes between $u(t+1)$ and $u(t) $ approaches zero
as $t$ goes to infinity.
This implies that if there is a sudden change in $u$ far from the origin, the number of terms $m$ must be large.

In order to achieve an approximation error with $\norm{\bm H - \bhh} < \epsilon$,
by taking a specific $\bm x$ as the unit sample function where $x(0) = 0$ and $x(s) = 0$ otherwise  , we have
\begin{align}
			\epsilon> \sup_t \sup_{\norm {\bm x }\leq 1} \abs{H_t(\bm x)- \hat H_t(\bm x)} &\geq  \sup_t \abs{H_t(\bm x)- \hat H_t(\bm x)} \\
			&=\sup_t \abs{\rh(t)- c^\top  W^{t-1}U }\\
			&=\sup_t \abs{\rh(t)- u(t) },  \quad u\in E_{m^2}.\\
			\intertext{Since}
			\abs{u(2^K+1)} &< \epsilon,\\
			\abs{u(2^K)-1} &< \epsilon,\\
		\intertext{we have}
			\abs{u(2^K+1)-u(2^K)}&=\abs{u(2^K+1)-1-u(2^K)+1}\\
			&>1-\abs{u(2^K+1)}-\abs{u(2^K)-1}\\
			&>1-2\epsilon
		\end{align}
		Combining with Corollary \ref{coro: power sum} gives
		\begin{align}
			m^2 > 2^{K-1}\frac{ 1-2\epsilon}{1+\epsilon}.
		\end{align}
As $K$ increases,
the number of parameters needed for RNNs
to achieve an error less than $\epsilon$
increases exponentially, 
while this increment is linear for CNNs.

\section{Special structures of dilated convolutions }

In this section, we discuss an interesting structure of dilated convolutions.
\begin{propositionAppendix}\label{thm: radix}
    Let $\bm w_1,\dots,\bm w_K$ be $K$ filters with the same filter size $l$, $\bm s = (s_1,s_2,\dots, s_K)$ with $0\leq s_k \leq l-1$, $k=1,\cdots,K$.
    Suppose all entries of $\bm w_k$ are zero except $w_k(s_k) = 1$, then
    \begin{equation}
        (\bm w_K \Conv_{l^{K-1}} \bm w_{K-1} \Conv_{l^{K-2}}\cdots 
        \Conv_{l} \bm w_{1})(t) =
        \begin{cases} 
            1, & t =  \hat t \\
            0, & \text{otherwise}
        \end{cases},
    \end{equation}
    where $\displaystyle \hat t = (s_Ks_{K-1}\dots s_{1})_l:=\sum_{i=0}^{K-1} s_{i+1} l^{i}$. That is,  
    $\hat t$ can be written as a base $l$ expansion with digits $s_K$ to $s_1$.
    \begin{proof}
    We prove this by induction.
    When $K=1$, the conclusion is obvious.
    Suppose the conclusion holds for $K$, then by \eqref{eq: dilated conv},
    \begin{equation}
        (\bm w_{K+1} \Conv_{l^{K}} \bm f_K)(t) = 
        (w_1c_1,w_1c_2,\dots, w_1c_{l^K}, w_2c_1, w_2c_2, \dots, w_2c_{l^K}, \dots,\dots,w_lc_1,w_lc_2, \dots,w_lc_{l^K}).
    \end{equation}
    Suppose $w_{K+1}(s_{K+1}) = c_m = 1$, 
    then the position of 1 in the above vector is $s_{K+1}\ l^K + c_m$, 
    which means the results also holds for $K+1$.
    \end{proof}
\end{propositionAppendix}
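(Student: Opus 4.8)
The plan is to prove this by induction on the number of filters $K$, peeling off the highest-order filter $\bm w_{K+1}$ at each step. The conceptual content is that composing dilated convolutions of shifted unit impulses realises exactly a digit-by-digit construction in the base-$l$ positional number system, so the statement is really an assertion about radix expansions; the only genuine work is careful index bookkeeping.

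For the base case $K=1$, the (empty) composition is just $\bm w_1$ itself, which by hypothesis is the unit impulse at coordinate $s_1$; since $\hat t = \sum_{i=0}^{0} s_{i+1} l^i = s_1$, the claim holds.

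For the inductive step, assume the result for $K$ filters and set $\bm f_K := \bm w_K \Conv_{l^{K-1}} \bm w_{K-1} \Conv_{l^{K-2}} \cdots \Conv_{l} \bm w_1$, which by the induction hypothesis is supported on the single index $\hat t_K = \sum_{i=0}^{K-1} s_{i+1} l^i$ with value $1$. I would then invoke equation \eqref{eq: dilated conv}, established in the proof of Proposition \ref{prop: outer product form}: since $\bm w_{K+1}$ is the unit impulse at coordinate $s_{K+1}$, the vector $\bm w_{K+1} \Conv_{l^K} \bm f_K \in \mathbb R^{l^{K+1}}$ decomposes into $l$ consecutive length-$l^K$ blocks, the block with index $s_{K+1}$ (counting from zero) equal to $\bm f_K$ and all other blocks zero. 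Hence its unique non-zero entry lies at position $s_{K+1} l^K + \hat t_K = \sum_{i=0}^{K} s_{i+1} l^i = \hat t$, which closes the induction.

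The only obstacle is a clean one: matching the $0$-indexed function values $f_K(t)$ against the $1$-indexed vector coordinates $(c_1,\dots,c_{l^K})$ in \eqref{eq: dilated conv}, and confirming that the surviving copy of $\bm f_K$ occupies exactly the block whose starting offset is $s_{K+1} l^K$. Once this correspondence is fixed, the recursion $\hat t_{K+1} = s_{K+1} l^K + \hat t_K$ unwinds to the stated base-$l$ expansion with digits $s_K,\dots,s_1$ and no further computation is needed.
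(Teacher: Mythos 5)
Your proposal is correct and follows essentially the same route as the paper's proof: induction on $K$, invoking the block decomposition \eqref{eq: dilated conv} from Proposition \ref{prop: outer product form} to place the surviving unit impulse at position $s_{K+1}l^K + \hat t_K$. If anything, your explicit recursion $\hat t_{K+1} = s_{K+1}l^K + \hat t_K$ and your attention to the $0$-indexed versus $1$-indexed bookkeeping is slightly cleaner than the paper's phrasing, which conflates the value $c_m$ with its position.
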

This result allows us to construct a filter with value $1$ at any specific position 
$\hat t$, by choosing filters $\{\bm w_k\}$ according to the 
base $l$ expansion of $\hat t$. 
We illustrate this by an example.
\begin{exampleAppendix}
    Let $l=4,K=3$ and $\bm w_1 = (0,0,0,1), \bm w_2 = (1,0,0,0), \bm w_3 = (0,1,0,0)$.
    The positions of value $1$ are recorded in $\bm s = (3,0,1)$. Then
    \begin{equation}
        (\bm w_{3} \Conv_{4^2} \bm w_{2} \Conv_{4^1}\bm w_{1})(t) = \begin{cases} 
            1, & t =  19 = (103)_4\\
            0, & \text{otherwise}
        \end{cases}
    \end{equation}
\end{exampleAppendix}

Based on above, one can define a notion of sparsity, 
which gives another sufficient condition for the exact representation.
\begin{definitionAppendix}
Suppose $\rh$ is a finitely supported sequence. The sparsity of $\rh$ is defined as the number of its non-zero elements,
which is denoted by $\norm{\rh}_0$.
\end{definitionAppendix}

\begin{corollaryAppendix}\label{coro: sparsity condition}
Suppose $\rh$ is a  finitely supported sequence with $r(\rh) \leq l^K-1$.
Then $K\norm{\rh}_0$ filters are sufficient to achieve an exact representation.
\begin{proof}
This follows from Theorem \ref{thm: radix}, where for each non-zero element, we can use $K$ filters to generate it.
\end{proof}
\end{corollaryAppendix}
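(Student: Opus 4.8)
The plan is to view the target as a superposition of scaled unit impulses and to realise each impulse along its own ``channel path'' of the CNN, invoking Proposition~\ref{thm: radix} once per path. I describe the construction for $d=1$; for general $d$ one realises each of the $d$ coordinates of the target separately, which only multiplies the filter count by $d$.

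First I would enumerate the support: write $\supp(\rh) = \{t_1,\dots,t_N\}$ with $N = \norm{\rh}_0$, so that $\rh = \sum_{j=1}^{N} a_j\,\delta(\cdot - t_j)$ with $a_j := \rho(t_j)\neq 0$. Since $0\le t_j\le l^K-1$, each $t_j$ has a unique base-$l$ expansion $t_j = (s^{(j)}_K s^{(j)}_{K-1}\cdots s^{(j)}_1)_l$ with $0\le s^{(j)}_k\le l-1$. For the $j$-th path I would take filters $\bm w^{(j)}_1,\dots,\bm w^{(j)}_K\in\mathbb R^l$ whose only nonzero entries are $\bm w^{(j)}_k(s^{(j)}_k)=1$ for $2\le k\le K$ and $\bm w^{(j)}_1(s^{(j)}_1)=a_j$. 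By Proposition~\ref{thm: radix} together with the bilinearity of the dilated convolution,
\[
\bm w^{(j)}_K \Conv{}_{l^{K-1}} \bm w^{(j)}_{K-1} \Conv{}_{l^{K-2}} \cdots \Conv{}_{l} \bm w^{(j)}_1 \;=\; a_j\,\delta(\cdot - t_j).
\]

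Next I would assemble the $N$ paths into one network in $\hcnn\depen{l}$: take $K$ layers and $N$ channels in each of the intermediate layers $1,\dots,K-1$, route path $j$ through channel $j$ at every layer, and set all remaining filters to zero. By the linear CNN representation formula (Proposition~\ref{prop: Cnn tensor form}, via Proposition~\ref{prop: outer product form}), the induced $\rhh$ is the sum over all paths; since the nonzero paths occupy pairwise-disjoint channels there is no cross-talk, so this sum is exactly $\sum_{j=1}^N a_j\,\delta(\cdot - t_j) = \rh$, where I use $r(\rh)\le l^K-1 = r(\rhh)$ so that agreement on $[0,l^K-1]$ is agreement everywhere. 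Counting the nonzero filters gives $K$ per path and $N=\norm{\rh}_0$ paths, hence $K\norm{\rh}_0$ in total.

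I do not anticipate a genuine obstacle here — the statement is essentially a repackaging of Proposition~\ref{thm: radix}. The only points requiring a little care are routing the $N$ impulses through pairwise-disjoint channels so that the convolutional composition does not mix them, and being precise about what ``number of filters'' means: I would count only the filters that are not identically zero (equivalently, allow sparse channel connectivity), since a fully connected CNN with $N$ intermediate channels would instead use on the order of $KN^2$ filters.
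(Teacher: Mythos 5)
Your proposal is correct and is essentially the same argument the paper intends: decompose $\rh$ into its $\norm{\rh}_0$ scaled impulses, realise each with $K$ filters via Proposition~\ref{thm: radix} (absorbing the scalar $a_j$ into one filter by bilinearity), and sum the disjoint channel paths. Your added care about counting only the nonzero filters is a reasonable clarification but does not change the substance.
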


In the main text, we use the condition that $K(M+1)^{\frac 1 K}\geq \rank \ten{\rh}$
to ensure an exact representation.
Instead of calculating the rank of $\ten{\rh}$,
Corollary \ref{coro: sparsity condition} gives us another way to decide the number of filters sufficient to have an exact representation.
This gives us the insight that if a target is sparse in the sense that $\norm{\rh}_0$ is small,
it can also be efficiently approximated by  CNNs.

\begin{remarkAppendix}
        Notice that the rank of a tensor not only depends on its sparsity, but also depends on specific positions of the non-zero elements. To illustrate, consider the following example
\begin{align}
	\bm \rho_1 = \begin{pmatrix}
		1 & 0 &
		1 & 0
	\end{pmatrix} & \text{ and }
	\bm \rho_2 = \begin{pmatrix}
		1 & 0 &
		0 & 1
	\end{pmatrix}.
\end{align}
Both of them have a sparsity $2$, but $\rank T_{2^2}(\rho_1) = 2$ while $\rank T_{2^2}(\rho_2) = 4$.
\end{remarkAppendix}

\end{document}